\documentclass[11pt]{article}

\usepackage[a4paper,margin=1in]{geometry}
\usepackage[T1]{fontenc}
\usepackage[utf8]{inputenc}
\usepackage{lmodern}
\usepackage{amsmath,amssymb,amsthm,mathtools}
\usepackage{bm}
\usepackage{mathrsfs}
\usepackage{hyperref}
\usepackage{enumitem}
\usepackage{booktabs}
\usepackage{microtype}
\usepackage{graphicx}
\usepackage{subcaption}
\usepackage{float}
\usepackage{multirow}
\usepackage{array}
\usepackage{xcolor}

\hypersetup{
    colorlinks=true,
    linkcolor=blue,
    citecolor=blue,
    urlcolor=blue
}

\newtheorem{theorem}{Theorem}
\newtheorem{lemma}{Lemma}
\newtheorem{proposition}{Proposition}
\newtheorem{corollary}{Corollary}
\newtheorem{assumption}{Assumption}
\theoremstyle{remark}

\title{Robustness of Similarity-based Positional Encoding Under Rotations:\\ Theoretical Analysis and Experimental Validation}
\author{
  Andrea Santomauro \and
  Luigi Portinale \and
  Giorgio Leonardi \\[0.5ex]
  \normalsize Computer Science Institute, DiSIT,
  University of Piemonte Orientale, Alessandria, Italy
}
\date{}

\begin{document}

\maketitle

\begin{abstract}
Positional encoding is a fundamental component of Transformer architectures, as it injects information about the spatial or sequential arrangement of inputs. Among recent alternatives to standard absolute and sinusoidal encodings, similarity-based positional encoding (simPE) has emerged as a flexible framework for representing positional structure through pairwise relations. simPE was originally designed for medical imaging applications, where geometric robustness is especially relevant: small rotations naturally arise during image acquisition, induced by imaging instruments, patient positioning, or slight acquisition misalignments. Despite its empirical promise, the theoretical behavior of simPE under geometric perturbations has not been fully characterized. In this paper, we study the robustness of simPE with respect to rotations, combining formal theoretical analysis with experimental validation. We first show that simPE is generally not rotation-invariant. We then prove that, under mild Lipschitz assumptions on the elementary components, simPE is stable under rotational perturbations and derive explicit perturbation bounds in Frobenius norm. We validate these findings experimentally on four controlled datasets—a synthetic Arrow dataset, a synthetic Shapes dataset (four geometric shape categories), a synthetic Digits dataset, and a benchmark image classification dataset (FashionMNIST)—in which training and validation images are kept in a fixed canonical orientation while test images are subjected to increasing rotation angles. Across all datasets, simPE consistently outperforms standard learned positional encoding in terms of accuracy, F1 score, precision, and recall under rotation, particularly in the small-to-moderate angle regime, corroborating the theoretical stability guarantees.
\end{abstract}

\section{Introduction}

Transformers have become a dominant architecture in both natural language processing and computer vision \cite{vaswani2017attention, dosovitskiy2021image}. Since self-attention is permutation-invariant, Transformer models require explicit positional information to preserve the structure of the input domain. In vision applications, positional encoding is especially important because it allows models to represent the spatial arrangement of image patches or feature tokens \cite{parmar2018image, liu2021swin}.

Classical positional encoding schemes, such as sinusoidal encodings \cite{vaswani2017attention} or learned absolute embeddings \cite{gehring2017convolutional, dosovitskiy2021image}, inject coordinate information directly. Relative positional encodings \cite{shaw2018self, wu2021rethinking} represent inter-token distances instead of absolute positions, and rotary positional embeddings \cite{su2024roformer} encode position through complex-valued rotations of query and key vectors. Similarity-based positional encoding (simPE) \cite{10.1007/978-3-032-16708-8_6,leonardi2024simpe} belongs to a distinct family in which positional information is encoded through pairwise similarity operators acting on token representations, without relying on fixed coordinate systems.

The study of simPE is particularly motivated by medical imaging, which was the original application domain for this encoding strategy \cite{leonardi2024simpe}. In medical imaging, robustness to small geometric perturbations is not merely a desirable theoretical property, but a practical necessity. Small rotations may naturally occur during acquisition and can be induced by the imaging instruments themselves, by slight variations in patient positioning, or by unavoidable differences in acquisition setup \cite{chen2021transunet}. As a result, even when two scans represent essentially the same anatomy, their spatial orientation may differ slightly. A positional encoding designed for this setting should therefore exhibit a controlled response to such perturbations.

While similarity-based encodings are expressive and often empirically effective, their geometric properties deserve careful theoretical analysis. In particular, robustness with respect to rotations is a desirable property in visual recognition systems and is especially relevant in medical imaging. In this paper, we address both the theoretical and empirical aspects of this robustness.

Our contributions are fourfold:
\begin{enumerate}[label=(\roman*)]
    \item we formalize simPE as a composition of elementary operators and identify the regularity assumptions needed for stability analysis;
    \item we show that simPE is not rotation-invariant in general;
    \item we prove that simPE remains robust under rotations and derive perturbation bounds, including a global Frobenius-norm estimate and an explicit small-angle bound;
    \item we provide experimental evidence of this robustness on four controlled datasets, demonstrating that simPE consistently outperforms learned positional encoding under increasing rotation angles, particularly in the small-to-moderate angle regime.
\end{enumerate}

The results presented here provide both a theoretical explanation and empirical corroboration for why simPE remains stable in practice even though it does not satisfy exact geometric invariance.

\section{Related Work}

\subsection{Positional Encodings in Transformers}

The original Transformer \cite{vaswani2017attention} employed fixed sinusoidal positional encodings, in which each position is mapped to a vector of sine and cosine functions of different frequencies. Learned absolute positional embeddings, introduced in the context of sequence-to-sequence models \cite{gehring2017convolutional} and later popularized for vision \cite{dosovitskiy2021image}, replace the fixed encoding with trainable vectors. These approaches associate an absolute position with each token and require that the model learn position-specific representations from data.

Relative positional encodings \cite{shaw2018self} shift the focus from absolute positions to pairwise distances between tokens. In the T5 model \cite{raffel2020exploring}, this idea is implemented through learned scalar biases in the attention logits. The Swin Transformer \cite{liu2021swin} uses window-based relative positional biases that have proved effective for hierarchical vision recognition. Rethinking and improving approaches for vision-specific relative PE have been proposed in \cite{wu2021rethinking}.

Rotary positional embedding (RoPE) \cite{su2024roformer} takes a different approach by encoding position information through rotations of the query and key vectors in the attention mechanism, achieving both relative and absolute positional information simultaneously.

Similarity-based positional encoding (simPE) \cite{leonardi2024simpe} departs from all of the above by constructing positional information entirely from pairwise similarities between token representations, without relying on coordinate-based indexing. This makes simPE naturally applicable to irregular or variable-size inputs and particularly suited for domains such as medical imaging, where spatial coordinates may be less informative than relational structure.

\subsection{Vision Transformers and Patch-based Processing}

The Vision Transformer (ViT) \cite{dosovitskiy2021image} demonstrated that a standard Transformer operating on sequences of non-overlapping image patches, combined with learned absolute positional embeddings, can match or exceed convolutional architectures on large-scale image classification benchmarks. The positional encoding in ViT is critical: without it, the model loses all spatial structure. Subsequent work has shown that the choice of positional encoding significantly affects both performance and robustness to geometric transformations \cite{wu2021rethinking}.

In medical imaging, Transformer-based models have been applied to segmentation tasks (e.g., TransUNet \cite{chen2021transunet}), detection, and reconstruction. These applications are particularly sensitive to spatial encoding because small positional discrepancies can affect anatomical localization. The geometric robustness of the underlying positional encoding is therefore a central concern.

\subsection{Geometric Robustness and Equivariance}

Geometric equivariance and invariance have been extensively studied for convolutional architectures. Group equivariant CNNs \cite{cohen2016group} and general E(2)-equivariant steerable networks \cite{weiler2019general} achieve exact rotation equivariance by construction, at the cost of architectural constraints. Deep convolutional networks without explicit equivariance constraints are known to be surprisingly sensitive to small image transformations \cite{azulay2019deep}, motivating data augmentation or architectural inductive biases as practical mitigations.

For Transformer-based architectures, exact rotation equivariance is difficult to enforce globally because the positional encoding breaks the symmetry of the attention mechanism. The relevant question is therefore not whether Transformers are rotation-invariant, but whether their positional encodings are \emph{stable}: do small rotations induce only small changes in the encoding? This weaker notion of robustness is the focus of the present work.

\subsection{Summary of Positioning}

The present paper studies simPE from the perspective of Lipschitz stability under rotations. Unlike equivariance-based approaches, we do not modify the architecture to enforce invariance; instead, we establish bounds on how much the encoding can change under rotation. This provides a theoretical underpinning for the empirical advantage of simPE over learned PE that we observe experimentally.

\section{Preliminaries}

\subsection{Notation}

Let \( X = (x_1, \dots, x_N) \in \mathbb{R}^{N \times d} \) denote a collection of \(N\) token representations in dimension \(d\). We denote by \( \|\cdot\| \) the Euclidean norm for vectors and by \( \|\cdot\|_F \) the Frobenius norm for matrices.

Let \( SO(d) \) denote the group of orthogonal \(d \times d\) matrices with determinant equal to one. For \( R \in SO(d) \), the rotated configuration of \(X\) is written as
\[
RX := (Rx_1, \dots, Rx_N).
\]

We denote by \( \kappa : \mathbb{R}^d \times \mathbb{R}^d \to \mathbb{R} \) a similarity function and by \( S(X) \in \mathbb{R}^{N \times N} \) the corresponding similarity matrix, defined component-wise as
\[
S(X)_{ij} = \kappa(x_i, x_j).
\]

A similarity-based positional encoding is modeled as a map
\[
\mathrm{simPE}(X) = \Phi(S(X)),
\]
where \( \Phi \) denotes a post-processing transformation that may include normalization, projection, or additional feature mixing steps.

\subsection{Lipschitz Continuity}

A function \( f : \mathcal{X} \to \mathcal{Y} \) between normed spaces is Lipschitz if there exists \(L \ge 0\) such that
\[
\|f(u) - f(v)\|_{\mathcal{Y}} \le L \|u - v\|_{\mathcal{X}}
\qquad \forall u,v \in \mathcal{X}.
\]

Lipschitz continuity is the key regularity property used in our analysis, because it allows perturbation bounds to be propagated through compositions of operators. The assumptions adopted below are standard in functional and variational analysis: bounded linear maps are Lipschitz, bounded bilinear maps are Lipschitz on bounded sets, and compositions of Lipschitz maps remain Lipschitz \cite{RockafellarWets1998,Kreyszig1978,Rudin1991}. The only delicate component is normalization, since the map \(z \mapsto z/\|z\|\) fails to be Lipschitz at the origin but is Lipschitz on every set bounded away from zero.

\section{Assumptions}

To make the stability analysis precise, we formulate the regularity assumptions in a way that is standard in functional and variational analysis: linear maps are globally Lipschitz, bilinear maps are Lipschitz on bounded sets, and normalization is Lipschitz on subsets bounded away from the origin \cite{RockafellarWets1998,Kreyszig1978}.

\begin{assumption}[Bounded feature domain]
\label{ass:bounded-domain}
There exists a constant \(M>0\) such that the token representations satisfy
\[
\|x_i\|\le M
\qquad \text{for all } i=1,\dots,N.
\]
Equivalently, the admissible set of inputs is contained in a bounded subset of \(\mathbb{R}^d\).
\end{assumption}

\begin{assumption}[Lipschitz similarity on bounded sets]
\label{ass:lipschitz-similarity}
The similarity function \(\kappa:\mathbb{R}^d \times \mathbb{R}^d \to \mathbb{R}\) is Lipschitz on bounded subsets of \(\mathbb{R}^d \times \mathbb{R}^d\). In particular, under Assumption~\ref{ass:bounded-domain}, there exists a constant \(L_\kappa(M)>0\) such that for all \(u,v,u',v' \in \mathbb{R}^d\) with
\[
\|u\|,\|v\|,\|u'\|,\|v'\| \le M,
\]
one has
\[
|\kappa(u,v)-\kappa(u',v')|
\le
L_\kappa(M)\bigl(\|u-u'\|+\|v-v'\|\bigr).
\]
\end{assumption}

Assumption~\ref{ass:lipschitz-similarity} is natural for the similarities typically used in practice. For example, bounded bilinear similarities are Lipschitz on bounded sets, and smooth similarity maps are Lipschitz on compact subsets \cite{Kreyszig1978,RockafellarWets1998}.

\begin{assumption}[Lipschitz post-processing]
\label{ass:lipschitz-post}
The post-processing map \(\Phi:\mathbb{R}^{N \times N} \to \mathbb{R}^{N \times m}\) is Lipschitz on the range of \(S\). That is, there exists \(L_\Phi>0\) such that for all admissible \(A,B \in \mathbb{R}^{N \times N}\),
\[
\|\Phi(A)-\Phi(B)\|_F \le L_\Phi \|A-B\|_F.
\]
\end{assumption}

\begin{assumption}[Non-degenerate normalization]
\label{ass:normalization}
If \(\Phi\) contains a normalization step of the form
\[
\mathcal{N}(z)=\frac{z}{\|z\|},
\]
then the vectors to be normalized satisfy
\[
\|z\| \ge \delta >0
\]
for some constant \(\delta\) on the admissible domain.
\end{assumption}

Assumption~\ref{ass:normalization} excludes the singular point \(z=0\), where normalization is not Lipschitz. On any set bounded away from the origin, however, normalization is Lipschitz. Indeed, if \(\|x\|,\|y\|\ge \delta>0\), then
\[
\left\|
\frac{x}{\|x\|}
-
\frac{y}{\|y\|}
\right\|
\le
\frac{2}{\delta}\|x-y\|.
\]
Hence normalization contributes a finite Lipschitz constant as long as the zero vector is excluded.

Under Assumptions~\ref{ass:bounded-domain}--\ref{ass:normalization}, all components entering \(\mathrm{simPE}\) are Lipschitz on the admissible domain, and therefore their composition is Lipschitz as well \cite{RockafellarWets1998}. This is the key regularity property used in the robustness analysis.

\section{simPE is not Rotation-Invariant}

We first clarify that simPE should not be expected to be invariant under rotations in general.

\begin{proposition}
\label{prop:not-invariant}
In general,
\[
\mathrm{simPE}(RX) \neq \mathrm{simPE}(X)
\qquad \text{for } R \in SO(d).
\]
\end{proposition}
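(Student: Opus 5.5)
The proposition is an existence statement: for the general framework $\mathrm{simPE}=\Phi\circ S$, rotation invariance can fail. So it suffices to exhibit one admissible similarity $\kappa$, one post-processing map $\Phi$ satisfying Assumptions~\ref{ass:bounded-domain}--\ref{ass:normalization}, one configuration $X$, and one rotation $R\in SO(d)$ with $\mathrm{simPE}(RX)\neq\mathrm{simPE}(X)$. First I would isolate why invariance is not automatic. If $\kappa$ were itself rotation-invariant, i.e.\ $\kappa(Ru,Rv)=\kappa(u,v)$ (as for the inner product or for Gaussian kernels of $\|u-v\|$), then $S(RX)=S(X)$ and hence $\mathrm{simPE}(RX)=\mathrm{simPE}(X)$ for every $\Phi$. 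A counterexample must therefore use a similarity that depends on orientation, through a non-isotropic bilinear form or a dependence on absolute coordinates.

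For the construction, take $d=2$, $N=2$, and the bounded bilinear similarity $\kappa(u,v)=u^\top A v$ with $A=\mathrm{diag}(1,0)$. This is Lipschitz on bounded sets, so Assumption~\ref{ass:lipschitz-similarity} holds, and it models a learned projection $W_Q^\top W_K$ that is not a multiple of the identity. Choose $x_1=x_2=e_1$, so that every entry of $S(X)$ equals $1$. Let $R$ be rotation by $\pi/2$, so that $Rx_i=e_2$ and every entry of $S(RX)$ equals $0$. Take $\Phi$ to be the identity map with $m=N$; it is linear with $L_\Phi=1$ and involves no normalization, so Assumption~\ref{ass:normalization} holds vacuously. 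All inputs have norm $1$, which gives Assumption~\ref{ass:bounded-domain} with $M=1$. Then $\mathrm{simPE}(X)\neq\mathrm{simPE}(RX)$.

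I would add a brief remark that the choice of $\Phi$ matters only through injectivity. For example, if $\Phi$ includes row normalization, taking $x_2=e_1+e_2$ instead avoids zero rows and still distinguishes the two configurations. This shows the failure of invariance is not an artifact of a degenerate $\Phi$.

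There is one subtle point. If the intended $\kappa$ is restricted to isotropic kernels, then the failure of invariance has to come from elsewhere, for instance from coordinate-dependent features in $X$ or from non-equivariant components of $\Phi$. I would therefore state explicitly that "in general" is meant over the admissible class defined by the assumptions. I would also note that the exact invariance in the isotropic case is a special situation, which motivates the stability analysis that follows rather than an invariance analysis.
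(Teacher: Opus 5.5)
Your main argument is correct, and it takes a different route from the paper's. The paper gives no concrete instance. It argues that invariance ``typically fails'' for three reasons: (i) $\kappa$ need not be rotation-invariant; (ii) even for invariant $\kappa$, $\Phi$ may involve coordinate-dependent quantities or normalization; (iii) upstream feature extractors are not rotation-equivariant. You read ``in general'' as an existence claim and exhibit a witness: $\kappa(u,v)=u^\top\mathrm{diag}(1,0)\,v$, $x_1=x_2=e_1$, a quarter-turn $R$, and $\Phi$ the identity. Then $S(X)$ is the all-ones matrix while $S(RX)=0$, and you check Assumptions~\ref{ass:bounded-domain}--\ref{ass:normalization} for this instance. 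This gives an actual proof. It also shows that non-invariance coexists with exactly the hypotheses of Theorem~\ref{thm:simpe-lipschitz}, so the later stability bounds are not vacuous consequences of invariance.

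Your preliminary observation is sharper than the paper's discussion. Since $\Phi$ sees only $S(X)$, an isotropic $\kappa$ forces $\mathrm{simPE}(RX)=\mathrm{simPE}(X)$ for every $\Phi$. So within the model $\Phi\circ S$ the paper's reason (ii) cannot operate. Reason (iii) concerns rotating the image before feature extraction, not the action $x_i\mapsto Rx_i$ in the statement. In exchange, the paper's discussion gives intuition about where non-invariance arises in real pipelines, including sources outside the formal model.

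Two side remarks need fixing. First, the row-normalization variant fails as written. With $\kappa(u,v)=u_1v_1$, row $i$ of $S(RX)$ equals $(Rx_i)_1$ times a fixed vector. For $x_1=e_1$ under the quarter-turn, $(Rx_1)_1=0$, so row $1$ vanishes whatever $x_2$ is. Change the angle instead, e.g.\ $\theta=\pi/6$ with $x_1=e_1$, $x_2=e_1+e_2$. Both first coordinates then stay positive, and their ratio changes from $1$ to $\sqrt{3}/(\sqrt{3}-1)$, so the normalized rows differ. A pure rescaling of $S$ would not work: with $x_1=x_2=e_1$ and $\theta=\pi/4$ you get $S(RX)=\tfrac12 S(X)$, which row normalization erases.

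Second, your closing suggestion that with isotropic $\kappa$ the failure could come from ``non-equivariant components of $\Phi$'' contradicts your own first paragraph. Within $\Phi\circ S$ it cannot happen, so either drop the sentence or flag it explicitly as leaving the formal model.
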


\begin{proof}
Rotation invariance would require
\[
\Phi(S(RX)) = \Phi(S(X))
\]
for every configuration \(X\) and every rotation \(R\). This identity typically fails for at least one of the following reasons. First, the similarity function \(\kappa\) need not itself be rotation-invariant. Second, even if \(\kappa\) were invariant, the subsequent transformation \(\Phi\) may depend on coordinate-dependent quantities or normalization effects that alter the output. Third, in practical architectures, the token representations entering simPE are usually extracted by modules that are not exactly rotation-equivariant. Therefore, the positional encoding induced by similarity is not rotation-invariant in general.
\end{proof}

This observation is not a defect per se. In many vision problems, orientation carries useful information, and exact invariance may be undesirable \cite{cohen2016group}. What matters instead is whether the encoding changes in a controlled way under small perturbations.

\section{Robustness to Small Rotations}

The assumptions above are intentionally mild and match the practical construction of simPE. In particular, they do not require every component to be globally Lipschitz on the whole ambient space; it is sufficient that each component be Lipschitz on the subset of representations effectively visited by the model. This is especially appropriate for medical imaging applications, where feature magnitudes are naturally constrained by the acquisition and preprocessing pipeline.

We now prove that simPE is stable under rotations.

\begin{lemma}
\label{lem:S-lipschitz}
Under Assumptions~\ref{ass:bounded-domain} and \ref{ass:lipschitz-similarity}, the similarity matrix map \( S : \mathbb{R}^{N \times d} \to \mathbb{R}^{N \times N} \) is Lipschitz with respect to the Frobenius norm. More precisely, for all admissible \(X,Y \in \mathbb{R}^{N \times d}\),
\[
\|S(X)-S(Y)\|_F \le 2L_\kappa(M)\sqrt{N}\,\|X-Y\|_F.
\]
\end{lemma}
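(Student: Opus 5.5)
The plan is to bound $\|S(X)-S(Y)\|_F$ entrywise using Assumption~\ref{ass:lipschitz-similarity}, and then sum the squares. Assumption~\ref{ass:bounded-domain} says all rows of admissible $X,Y$ have norm at most $M$. So for every pair $(i,j)$ we get
\[
|S(X)_{ij}-S(Y)_{ij}| = |\kappa(x_i,x_j)-\kappa(y_i,y_j)| \le L_\kappa(M)\bigl(a_i + a_j\bigr),
\]
where $a_i := \|x_i-y_i\|$. Note that $\sum_i a_i^2 = \|X-Y\|_F^2$, since the rows of $X-Y$ are exactly $x_i-y_i$.

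Next we square and sum over $i,j$, using $(a_i+a_j)^2 \le 2a_i^2 + 2a_j^2$:
\[
\|S(X)-S(Y)\|_F^2 \le L_\kappa(M)^2 \sum_{i,j}(a_i+a_j)^2 \le 2L_\kappa(M)^2\sum_{i,j}(a_i^2+a_j^2) = 4N\,L_\kappa(M)^2\,\|X-Y\|_F^2 .
\]
The last equality holds because each of the two double sums equals $N\sum_i a_i^2$. Taking square roots gives $\|S(X)-S(Y)\|_F \le 2L_\kappa(M)\sqrt{N}\,\|X-Y\|_F$, which is exactly the claimed bound.

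An alternative route is the triangle inequality in $\mathbb{R}^{N\times N}$. Write the matrix $(a_i+a_j)_{ij}$ as $a\mathbf{1}^\top + \mathbf{1}a^\top$. Each term has Frobenius norm $\|a\|\,\|\mathbf{1}\| = \sqrt{N}\,\|X-Y\|_F$, so the sum has norm at most $2\sqrt N\|X-Y\|_F$. This gives the same constant and is perhaps cleaner. I would present whichever reads better; both are routine.

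The main point needing care is not the computation but admissibility. The local Lipschitz constant $L_\kappa(M)$ applies only when all four arguments $x_i,x_j,y_i,y_j$ lie in the ball of radius $M$. I will state explicitly that ``admissible'' means every row of $X$ and of $Y$ satisfies Assumption~\ref{ass:bounded-domain}. This holds in the intended application $Y=RX$, since $\|Rx_i\|=\|x_i\|$ for $R\in SO(d)$, so rotated configurations remain admissible.
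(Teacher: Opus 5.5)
Your proof is correct and matches the paper's argument: an entrywise bound from Assumption~\ref{ass:lipschitz-similarity}, then $(a+b)^2\le 2a^2+2b^2$, then summing over $i,j$ to obtain $4N\,L_\kappa(M)^2\|X-Y\|_F^2$. Your explicit remark that admissibility requires every row of both $X$ and $Y$ to lie in the $M$-ball, which is automatic for $Y=RX$ since rotations preserve norms, is a useful clarification that the paper leaves implicit.
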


\begin{proof}
For each pair \((i,j)\), Assumption~\ref{ass:lipschitz-similarity} yields
\[
|S(X)_{ij} - S(Y)_{ij}|
=
|\kappa(x_i,x_j)-\kappa(y_i,y_j)|
\le
L_\kappa(M)\bigl(\|x_i-y_i\|+\|x_j-y_j\|\bigr).
\]
Squaring both sides and using \((a+b)^2 \le 2a^2 + 2b^2\), we obtain
\[
|S(X)_{ij} - S(Y)_{ij}|^2
\le
2L_\kappa(M)^2\|x_i-y_i\|^2
+
2L_\kappa(M)^2\|x_j-y_j\|^2.
\]
Summing over all \(i,j\),
\[
\|S(X)-S(Y)\|_F^2
\le
4L_\kappa(M)^2 N \sum_{i=1}^N \|x_i-y_i\|^2
=
4L_\kappa(M)^2 N \|X-Y\|_F^2.
\]
Taking square roots concludes the proof.
\end{proof}

\begin{theorem}
\label{thm:simpe-lipschitz}
Under Assumptions~\ref{ass:bounded-domain}--\ref{ass:normalization}, simPE is Lipschitz on the admissible domain. In particular, for all admissible \(X,Y \in \mathbb{R}^{N \times d}\),
\[
\|\mathrm{simPE}(X)-\mathrm{simPE}(Y)\|_F
\le
2L_\Phi L_\kappa(M)\sqrt{N}\,\|X-Y\|_F.
\]
\end{theorem}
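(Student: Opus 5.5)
The plan is to obtain the bound by composing the two Lipschitz estimates already available: Lemma~\ref{lem:S-lipschitz} for the similarity map \(S\), and Assumption~\ref{ass:lipschitz-post} for the post-processing map \(\Phi\). Since \(\mathrm{simPE}=\Phi\circ S\), the constant should simply be the product of the two constants.

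The key steps are as follows.

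First, fix admissible \(X,Y\in\mathbb{R}^{N\times d}\). By Assumption~\ref{ass:bounded-domain}, all rows of \(X\) and \(Y\) have norm at most \(M\), so the hypotheses of Lemma~\ref{lem:S-lipschitz} hold.

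Next, observe that \(S(X)\) and \(S(Y)\) lie in the range of \(S\) on the admissible domain. This is exactly the set on which Assumption~\ref{ass:lipschitz-post} gives a Lipschitz constant \(L_\Phi\).

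Then write the chain
\[
\|\mathrm{simPE}(X)-\mathrm{simPE}(Y)\|_F=\|\Phi(S(X))-\Phi(S(Y))\|_F\le L_\Phi\|S(X)-S(Y)\|_F\le 2L_\Phi L_\kappa(M)\sqrt{N}\,\|X-Y\|_F .
\]
The first inequality uses Assumption~\ref{ass:lipschitz-post}, and the second uses Lemma~\ref{lem:S-lipschitz}. This gives the claimed estimate, and with it the Lipschitz property on the admissible domain.

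The only delicate point is the role of Assumption~\ref{ass:normalization}. I would treat it as the justification that \(L_\Phi\) is finite whenever \(\Phi\) contains a normalization step. Write \(\Phi\) as a composition \(\Phi_k\circ\cdots\circ\Phi_1\) of linear and projection maps, which are globally Lipschitz, together with normalizations \(\mathcal N\). On inputs bounded away from zero by \(\delta\), each normalization has Lipschitz constant \(2/\delta\). One then takes \(L_\Phi\) to be the product of the stage constants.

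The step I expect to need care is checking that the intermediate vectors produced from admissible \(S(X)\) indeed satisfy \(\|z\|\ge\delta\). If both endpoints of a pair satisfy this, the bound \(\|x/\|x\|-y/\|y\|\|\le (2/\delta)\|x-y\|\) applies directly to that pair. No path or convexity argument is then needed, since the Lipschitz inequality is a two-point statement.

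Since Assumption~\ref{ass:lipschitz-post} already posits \(L_\Phi\) on the range of \(S\), this consistency check is really about whether the assumptions are compatible. It is not a gap in the composition argument, and the main proof reduces to the one-line chain above.
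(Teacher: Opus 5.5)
Your proposal is correct and matches the paper's proof: the paper also writes $\mathrm{simPE}=\Phi\circ S$ and gets the bound from the same one-line chain, first Assumption~\ref{ass:lipschitz-post} and then Lemma~\ref{lem:S-lipschitz}. Your remarks on Assumption~\ref{ass:normalization} are not needed for that chain; the paper likewise uses that assumption only implicitly, as the reason $L_\Phi$ is finite.
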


\begin{proof}
By definition,
\[
\mathrm{simPE}(X)=\Phi(S(X)).
\]
Using Assumption~\ref{ass:lipschitz-post},
\[
\|\mathrm{simPE}(X)-\mathrm{simPE}(Y)\|_F
\le
L_\Phi \|S(X)-S(Y)\|_F.
\]
Applying Lemma~\ref{lem:S-lipschitz} yields
\[
\|\mathrm{simPE}(X)-\mathrm{simPE}(Y)\|_F
\le
2L_\Phi L_\kappa(M)\sqrt{N}\,\|X-Y\|_F.
\]
This proves the claim.
\end{proof}

The theorem immediately yields robustness under rotations.

\begin{corollary}
\label{cor:rotation-bound}
Let \(R \in SO(d)\). Under Assumptions~\ref{ass:bounded-domain}--\ref{ass:normalization},
\[
\|\mathrm{simPE}(RX)-\mathrm{simPE}(X)\|_F
\le
2L_\Phi L_\kappa(M)\sqrt{N}\,\|RX-X\|_F.
\]
\end{corollary}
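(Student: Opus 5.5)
The corollary follows by specializing Theorem~\ref{thm:simpe-lipschitz} to the pair \(Y = RX\). The only work is to check that this pair is admissible, so that the Lipschitz estimate may legitimately be applied.

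First we verify that \(RX\) lies in the admissible domain whenever \(X\) does. Since \(R \in SO(d)\) is orthogonal, \(\|Rx_i\| = \|x_i\| \le M\) for every \(i\). Hence Assumption~\ref{ass:bounded-domain} holds for \(RX\) with the same constant \(M\), and Assumption~\ref{ass:lipschitz-similarity} applies with the same \(L_\kappa(M)\). In particular, Lemma~\ref{lem:S-lipschitz} applies to the pair \((X, RX)\). This gives
\[
\|S(RX)-S(X)\|_F \le 2L_\kappa(M)\sqrt{N}\,\|RX-X\|_F .
\]

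Next we need \(S(RX)\) to be an admissible argument of \(\Phi\), so that Assumption~\ref{ass:lipschitz-post} yields \(\|\Phi(S(RX))-\Phi(S(X))\|_F \le L_\Phi \|S(RX)-S(X)\|_F\). We also need any normalization step inside \(\Phi\) to satisfy Assumption~\ref{ass:normalization} at \(S(RX)\). This is the only delicate point. The norm bound \(M\) is automatically preserved by rotation. The non-degeneracy constant \(\delta\), however, is a hypothesis on the admissible domain. We therefore interpret the corollary's phrase ``under Assumptions~\ref{ass:bounded-domain}--\ref{ass:normalization}'' as requiring both \(X\) and \(RX\) to be admissible, which is exactly the setting of Theorem~\ref{thm:simpe-lipschitz}. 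We will remark that when \(\kappa\) is rotation-invariant, for instance inner-product or distance based, \(S(RX)=S(X)\), and admissibility is then trivial.

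With admissibility in hand, we invoke Theorem~\ref{thm:simpe-lipschitz} with \(Y = RX\) to obtain the stated bound directly. Finally, we note that \(\|RX-X\|_F = \|(R-I)X\|_F \le \|R-I\|_2 \|X\|_F\). This factor vanishes as \(R \to I\), so the corollary indeed expresses stability under small rotations, which prepares the explicit small-angle estimate that follows.
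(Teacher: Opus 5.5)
Your proposal is correct and follows the paper's own one-line proof, which applies Theorem~\ref{thm:simpe-lipschitz} with \(Y=RX\). Your additional checks fill in hypotheses the paper leaves implicit: that \(\|Rx_i\|=\|x_i\|\le M\), and that \(RX\) must also be admissible for \(\Phi\), including the normalization bound \(\delta\).
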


\begin{proof}
Apply Theorem~\ref{thm:simpe-lipschitz} with \(Y=RX\).
\end{proof}

Therefore, if \(R\) is close to the identity, the variation of simPE is necessarily small.

\section{A Bound for Small-Angle Rotations}

The previous result is general and applies to any rotation. We now make the dependence on the rotation angle explicit.

Let \(R_\theta \in SO(d)\) be a rotation of angle \(\theta\), where \(|\theta|\) is small. Since \(R_\theta\) is orthogonal,
\[
\|R_\theta X - X\|_F \le \|R_\theta - I\|_2 \|X\|_F.
\]
For small \(\theta\), one has
\[
\|R_\theta - I\|_2 = O(|\theta|).
\]
Hence there exists a geometric constant \(c>0\) such that
\[
\|R_\theta X - X\|_F \le c |\theta| \|X\|_F
\]
for sufficiently small \(|\theta|\).

Combining this estimate with Corollary~\ref{cor:rotation-bound} gives the following.

\begin{corollary}
\label{cor:small-angle}
For sufficiently small rotation angles \(\theta\),
\[
\|\mathrm{simPE}(R_\theta X)-\mathrm{simPE}(X)\|_F
\le
2cL_\Phi L_\kappa(M)\sqrt{N}\,|\theta|\,\|X\|_F.
\]
\end{corollary}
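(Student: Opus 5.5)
The plan is to chain Corollary~\ref{cor:rotation-bound} with the operator-norm estimate stated just before the corollary, and to make the constant $c$ explicit where possible. Corollary~\ref{cor:rotation-bound} gives, for any rotation,
\[
\|\mathrm{simPE}(R_\theta X)-\mathrm{simPE}(X)\|_F \le 2L_\Phi L_\kappa(M)\sqrt{N}\,\|R_\theta X - X\|_F .
\]
For this to apply, both $X$ and $R_\theta X$ must be admissible. Since $R_\theta$ is orthogonal, $\|R_\theta x_i\| = \|x_i\| \le M$, so Assumption~\ref{ass:bounded-domain} is preserved. The remaining hypotheses (Assumptions~\ref{ass:lipschitz-post} and \ref{ass:normalization} for $S(R_\theta X)$) must be taken as part of the standing admissibility of the rotated configuration. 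I would state this explicitly, since it is the only genuinely non-automatic point.

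Next I bound $\|R_\theta X - X\|_F$. The configuration $R_\theta X$ acts column-wise on the tokens, so
\[
\|R_\theta X - X\|_F^2=\sum_i \|(R_\theta - I)x_i\|^2 \le \|R_\theta - I\|_2^2 \sum_i\|x_i\|^2 ,
\]
which gives $\|R_\theta X - X\|_F \le \|R_\theta-I\|_2\,\|X\|_F$.

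To make $c$ concrete, take a rotation of angle $\theta$ in $SO(d)$: a rotation acting in a $2$-plane, or more generally one whose eigenvalues are $e^{\pm i\theta_k}$ with $|\theta_k|\le|\theta|$. Since $R_\theta - I$ is normal, its spectral norm equals its largest eigenvalue modulus, namely $\max_k |e^{i\theta_k}-1| = \max_k 2|\sin(\theta_k/2)| \le |\theta|$. This yields $c=1$, and the bound holds for all $\theta$, not merely small ones. If one prefers to match the paper's $O(|\theta|)$ phrasing, one can instead write $R_\theta=\exp(\theta A)$ with $A$ skew-symmetric and $\|A\|_2\le 1$. Then $\|R_\theta-I\|_2\le e^{|\theta|}-1\le c|\theta|$ for $|\theta|\le 1$, with $c=e-1$.

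Substituting $\|R_\theta X - X\|_F\le c|\theta|\,\|X\|_F$ into the first display gives the claimed inequality. The main obstacle is not analytic but definitional: "a rotation of angle $\theta$" in $SO(d)$ for $d>2$ must be pinned down, for instance via the principal angles or a unit-norm skew generator, so that $c$ is well defined. I would fix that convention at the start and then run the chain above.
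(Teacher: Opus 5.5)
Your proposal is correct and follows the paper's own argument: bound $\|R_\theta X - X\|_F \le \|R_\theta - I\|_2\,\|X\|_F \le c|\theta|\,\|X\|_F$, then substitute into Corollary~\ref{cor:rotation-bound}. Your additional steps fill in details the paper leaves implicit: you check that $R_\theta X$ remains admissible, and you make $c$ explicit (e.g.\ $c=1$ from the spectrum of the normal matrix $R_\theta - I$, which also removes the small-angle restriction).
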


\begin{proof}
Substitute the estimate
\[
\|R_\theta X-X\|_F \le c|\theta|\|X\|_F
\]
into Corollary~\ref{cor:rotation-bound}.
\end{proof}

Corollary~\ref{cor:small-angle} formalizes the notion of robustness considered in this work: simPE is not rotation-invariant, but its response to rotations is linearly bounded in the angle magnitude.

\section{Global Frobenius-Norm Estimate}

We now state the corresponding global estimate in Frobenius norm.

\begin{proposition}
\label{prop:global-bound}
Under Assumptions~\ref{ass:bounded-domain}--\ref{ass:normalization}, there exists a constant \(C>0\) such that for every admissible input \(X\) and every rotation \(R \in SO(d)\),
\[
\|\mathrm{simPE}(RX)-\mathrm{simPE}(X)\|_F \le C \|RX-X\|_F,
\]
with
\[
C = 2L_\Phi L_\kappa(M)\sqrt{N}.
\]
\end{proposition}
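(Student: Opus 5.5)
The plan is to obtain Proposition~\ref{prop:global-bound} as a direct specialization of Theorem~\ref{thm:simpe-lipschitz}, in the same way as Corollary~\ref{cor:rotation-bound}. The only extra work is to check that the pair \((X, RX)\) lies in the admissible domain on which the Lipschitz constants \(L_\kappa(M)\) and \(L_\Phi\) are valid.

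First, fix an admissible \(X\) and any \(R \in SO(d)\). Since \(R\) is orthogonal, \(\|Rx_i\| = \|x_i\| \le M\) for every \(i\). Hence \(RX\) also satisfies Assumption~\ref{ass:bounded-domain} with the same constant \(M\). This is the key point: the bound \(M\) is rotation-stable, so the same \(L_\kappa(M)\) from Assumption~\ref{ass:lipschitz-similarity} applies to the quadruples \((x_i, x_j, Rx_i, Rx_j)\). Lemma~\ref{lem:S-lipschitz} then gives
\[
\|S(RX)-S(X)\|_F \le 2L_\kappa(M)\sqrt{N}\,\|RX-X\|_F .
\]

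Second, I need \(S(RX)\) and \(S(X)\) to be admissible arguments for \(\Phi\). That means they lie in the range of \(S\) over admissible inputs, so Assumption~\ref{ass:lipschitz-post} applies. If \(\Phi\) contains a normalization, the vectors being normalized must also satisfy Assumption~\ref{ass:normalization}. This is the step I expect to be the only real obstacle. Boundedness transfers automatically under rotation, but non-degeneracy \(\|z\|\ge\delta\) is not obviously preserved. I will handle it by reading the admissible domain as closed under the rotations considered, as the statement's phrase ``every admissible input \(X\) and every rotation \(R\)'' already implicitly requires, so that both \(X\) and \(RX\) are admissible.

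Under that reading, Theorem~\ref{thm:simpe-lipschitz} applies with \(Y = RX\), giving
\[
\|\mathrm{simPE}(RX)-\mathrm{simPE}(X)\|_F \le 2L_\Phi L_\kappa(M)\sqrt{N}\,\|RX-X\|_F .
\]
This is the claim with \(C = 2L_\Phi L_\kappa(M)\sqrt{N}\). Finally, I will note that \(C\) depends only on \(N\), \(M\), and the constants of \(\kappa\) and \(\Phi\), and not on \(X\) or \(R\). This uniformity is what makes the estimate global, and it follows because none of the constants used above was chosen depending on the particular input or rotation.
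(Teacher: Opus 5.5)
Your proposal is correct and follows the paper's route: the paper proves this proposition as a direct restatement of Corollary~\ref{cor:rotation-bound}, which is Theorem~\ref{thm:simpe-lipschitz} applied with \(Y = RX\). Your check that \(\|Rx_i\| = \|x_i\| \le M\), together with your observation that admissibility of \(RX\) (in particular the non-degeneracy in Assumption~\ref{ass:normalization}) must be read into the hypothesis, makes explicit a point the paper relies on without stating it.
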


\begin{proof}
This is a direct reformulation of Corollary~\ref{cor:rotation-bound}.
\end{proof}

The usefulness of Proposition~\ref{prop:global-bound} lies in the fact that it separates the geometric perturbation term \(\|RX-X\|_F\) from the architectural constant \(C\), which depends only on the regularity of the operators defining simPE. In other words, the response of the positional encoding to rotations is controlled jointly by the magnitude of the transformation and by the Lipschitz constants of the underlying maps.

\section{Datasets}
\label{sec:datasets}

A central challenge in evaluating rotation robustness is the availability of suitable benchmarks. To test whether a positional encoding is robust to rotations, the experimental protocol must be \emph{controlled}: the model should be trained on images in a fixed canonical orientation, so that any positional encoding learned or adapted at training time reflects an unrotated input distribution. Robustness is then measured at test time by evaluating on images subjected to increasing rotation angles. Mixing rotated and unrotated images in the training set would confound the positional encoding with data-augmentation effects, making it impossible to cleanly assess the intrinsic geometric stability of the encoding.

For this reason, all datasets used in our experiments satisfy the following protocol:
\begin{itemize}
    \item the training and validation sets contain images exclusively in their canonical (unrotated) orientation;
    \item the test set is evaluated separately at multiple discrete rotation angles, ranging from small perturbations to large-angle transformations.
\end{itemize}

This design mirrors the practical setting of interest: a model deployed in the field encounters inputs that may have been acquired at a slightly different orientation than the training data, and we wish to quantify how gracefully performance degrades as the angular deviation grows.

We use four datasets that span both synthetic and benchmark settings.

\subsection{Arrow Dataset}

The Arrow dataset is a synthetic image classification benchmark designed specifically for controlled evaluation of rotation robustness. Each image contains an arrow drawn on a uniform background, with the task being to classify the direction the arrow is pointing. Because arrow direction is by definition a rotation-sensitive property, this dataset provides a precise and interpretable stress test for positional encodings: a model that is truly orientation-aware will correctly classify the arrow direction under modest rotations, while a model whose positional encoding degrades rapidly under rotation will suffer large accuracy drops even at small angles.

All training and validation images are generated with arrows in a fixed set of canonical orientations. At test time, images are rotated by angles of $5^\circ$, $10^\circ$, $15^\circ$, $20^\circ$, and $25^\circ$, evaluated independently.

Images are generated at $224\times224$ pixels (RGB, white background) using a custom rendering script. Arrow geometry is randomised per instance: shaft length (40--70\% of image size), head size (15--30\%), shaft thickness (5--15\%), and colour (random vivid hue) are independently sampled, introducing meaningful intra-class variability. The dataset comprises four orientation classes (top, bottom, left, right), each with 5\,000 training images and 1\,000 validation images in the canonical orientation, plus 1\,000 test images per class at each of the five rotation angles evaluated (4\,000 test images per rotation level, 20\,000 total).

\subsection{Digits Dataset}

The Digits dataset is a synthetic image classification dataset inspired by the classical handwritten digit recognition benchmark \cite{lecun1998gradient}. Rather than using the original benchmark directly, we generated a controlled synthetic version in order to have full command over the rotation applied to each split: training and validation images are guaranteed to be in the canonical upright orientation, while rotated images appear exclusively in the test set. This design eliminates any risk of the model seeing rotated examples during training and allows us to cleanly attribute test-time degradation to the positional encoding alone. The task is 10-class classification over the ten decimal digits (0--9). Handwritten digits are highly orientation-sensitive: many digit pairs (e.g., $6$ and $9$, $2$ and $5$) become visually ambiguous or confused when rotated, providing a natural and interpretable degradation mechanism as the rotation angle increases.

In our protocol, the training and validation splits retain the canonical upright orientation of all digits. The test split is evaluated at rotation angles of $15^\circ$, $30^\circ$, $45^\circ$, $60^\circ$, $75^\circ$, and $90^\circ$, allowing us to trace the full degradation curve from mild perturbations to right-angle rotations.

Images are rendered at $224\times224$ pixels in grayscale using a custom script with a proportional serif font. Each digit is centred with a small random positional jitter ($\pm 10$ pixels) to add realistic variability, and mild Gaussian blur is applied to produce MNIST-like appearance. The dataset is balanced across all ten classes: each class contains 5\,000 training images and 1\,000 validation images in the canonical upright orientation, and 1\,000 test images per class per rotation angle (10\,000 test images per rotation level, 60\,000 total).

\subsection{FashionMNIST Dataset}

FashionMNIST \cite{xiao2017fashion} is a drop-in replacement for MNIST consisting of $28{\times}28$ grayscale images from 10 clothing categories (T-shirt, trouser, pullover, dress, coat, sandal, shirt, sneaker, bag, ankle boot). It was specifically designed to be more challenging than MNIST \cite{lecun1998gradient}, as clothing items are more complex in shape and have more intra-class variation. FashionMNIST is a well-established benchmark in the machine learning community and provides a real-world flavour that complements the purely synthetic Arrow dataset.

Training and validation images are kept in the standard upright orientation. The test split is evaluated at rotation angles of $5^\circ$, $10^\circ$, $15^\circ$, and $30^\circ$. The smaller maximum angle compared to the Digits dataset reflects the fact that clothing categories are less susceptible to drastic orientation confusion, making moderate rotations the relevant regime for this benchmark.

\subsection{Shapes Dataset}

The Shapes dataset is a synthetic image classification benchmark designed to evaluate orientation sensitivity across a diverse set of geometric primitives. Each image contains a single shape drawn in a random non-white colour on a plain white background. The four shape classes are: \emph{arrow} (a horizontal shaft with a triangular head pointing right), \emph{L-shape} (two perpendicular rectangles meeting at a corner), \emph{T-shape} (a horizontal bar with a downward vertical stem), and \emph{F-shape} (a vertical spine with a long top arm and a shorter middle arm). All four shapes are asymmetric under most rotations, making this dataset well-suited to probe whether a positional encoding preserves orientation information across a wider variety of shapes than the Arrow dataset alone.

Images are generated at $224\times224$ pixels (RGB). Shape geometry---arm lengths, thicknesses, and overall scale---is independently randomised for each instance within prescribed bounds, providing within-class variability while keeping the class structure unambiguous. The dataset is balanced: each class contains 5\,000 training images and 1\,000 validation images in the canonical upright orientation. The test split provides 1\,000 images per class at each evaluated angle, yielding 4\,000 test images per rotation level. Rotation angles evaluated at test time are $15^\circ$, $30^\circ$, $45^\circ$, $60^\circ$, $75^\circ$, and $90^\circ$, matching the range used for the Digits dataset. This extended rotation range allows us to assess both the early-angle advantage of simPE and the convergence behaviour at large perturbations.

\section{Experimental Setup}
\label{sec:setup}

\subsection{Models and Positional Encodings}

We compare two positional encoding strategies within the same Transformer-based classification backbone:

\begin{itemize}
    \item \textbf{Similarity PE (simPE)}: the similarity-based positional encoding introduced in \cite{leonardi2024simpe}, whose theoretical robustness properties are analyzed in Sections~\ref{sec:datasets}--\ref{sec:setup}. simPE constructs positional information from pairwise similarities between patch token representations, as described in Section~3.
    \item \textbf{Learned PE}: standard learned absolute positional embeddings \cite{gehring2017convolutional, dosovitskiy2021image}, in which a trainable embedding vector is associated with each spatial position and added to the corresponding token representation before the Transformer layers.
\end{itemize}

Both configurations share the same backbone architecture, training objective, and hyperparameters. The only difference between the two models is the positional encoding module, ensuring that any performance gap under rotation is attributable solely to the encoding choice.

Both models share the same Vision Transformer (ViT) backbone. Input images of size $224\times224$ are divided into non-overlapping patches of size $16\times16$, yielding a sequence of $196$ tokens. Each token is projected to an embedding dimension of $256$. The Transformer consists of $4$ encoder layers, each with $4$ self-attention heads and a feedforward network of hidden dimension $512$. The classification head is a linear layer applied to the \texttt{[CLS]} token output.

Both models are trained with the AdamW optimiser (learning rate $10^{-4}$) and a CosineAnnealingLR learning rate schedule, for up to $200$ epochs with early stopping (patience $= 5$ epochs). The batch size is $128$ for all experiments. No rotation augmentation is applied during training or validation.

\subsection{Evaluation Protocol}

For each dataset, both models are trained on the canonical-orientation training split. After training, each model is evaluated on the test split at each rotation angle independently. Rotation is applied to each test image before it is fed to the model; no rotation augmentation is used during training or validation. This ensures that the test-time behavior reflects the encoding's intrinsic response to rotation rather than any augmentation-induced robustness.

\subsection{Metrics}

We report four classification metrics evaluated at each rotation angle:
\begin{itemize}
    \item \textbf{Accuracy}: the percentage of correctly classified test images.
    \item \textbf{F1 score}: the macro-averaged harmonic mean of precision and recall across classes.
    \item \textbf{Precision}: the macro-averaged fraction of retrieved instances that are relevant.
    \item \textbf{Recall}: the macro-averaged fraction of relevant instances that are retrieved.
\end{itemize}

The use of all four metrics is motivated by the fact that accuracy alone can be misleading for multi-class classification tasks with potential class imbalance: F1, precision, and recall provide complementary views of the model's performance degradation as rotation increases.

\section{Experimental Results}
\label{sec:results}

We present results separately for each dataset. In all figures, the horizontal axis represents the rotation angle applied to the test images (in degrees); the curve labeled ``Similarity PE'' corresponds to simPE, and ``Learned PE'' corresponds to the learned absolute embedding baseline. All metrics are reported on the held-out test split.

\subsection{Arrow Dataset}

\begin{figure}[H]
    \centering
    \begin{subfigure}[b]{0.48\textwidth}
        \includegraphics[width=\textwidth]{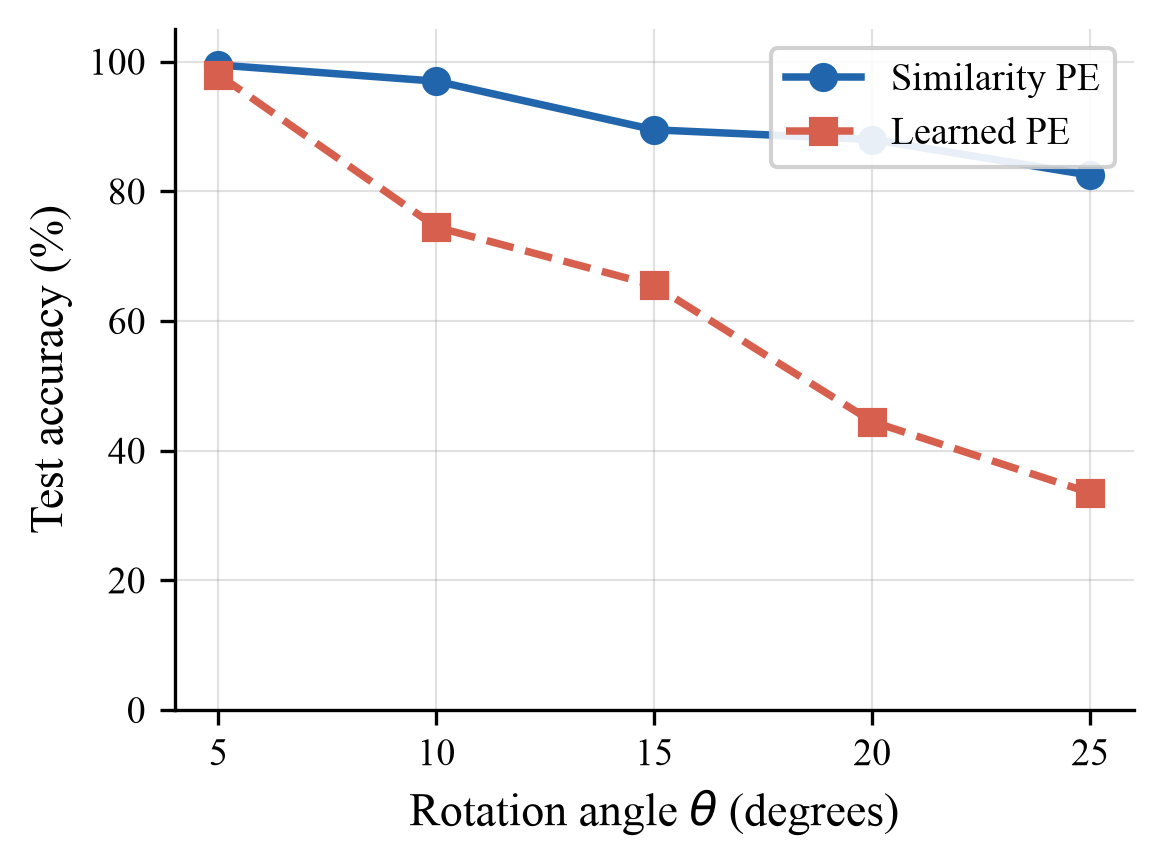}
        \caption{Test accuracy (\%)}
    \end{subfigure}
    \hfill
    \begin{subfigure}[b]{0.48\textwidth}
        \includegraphics[width=\textwidth]{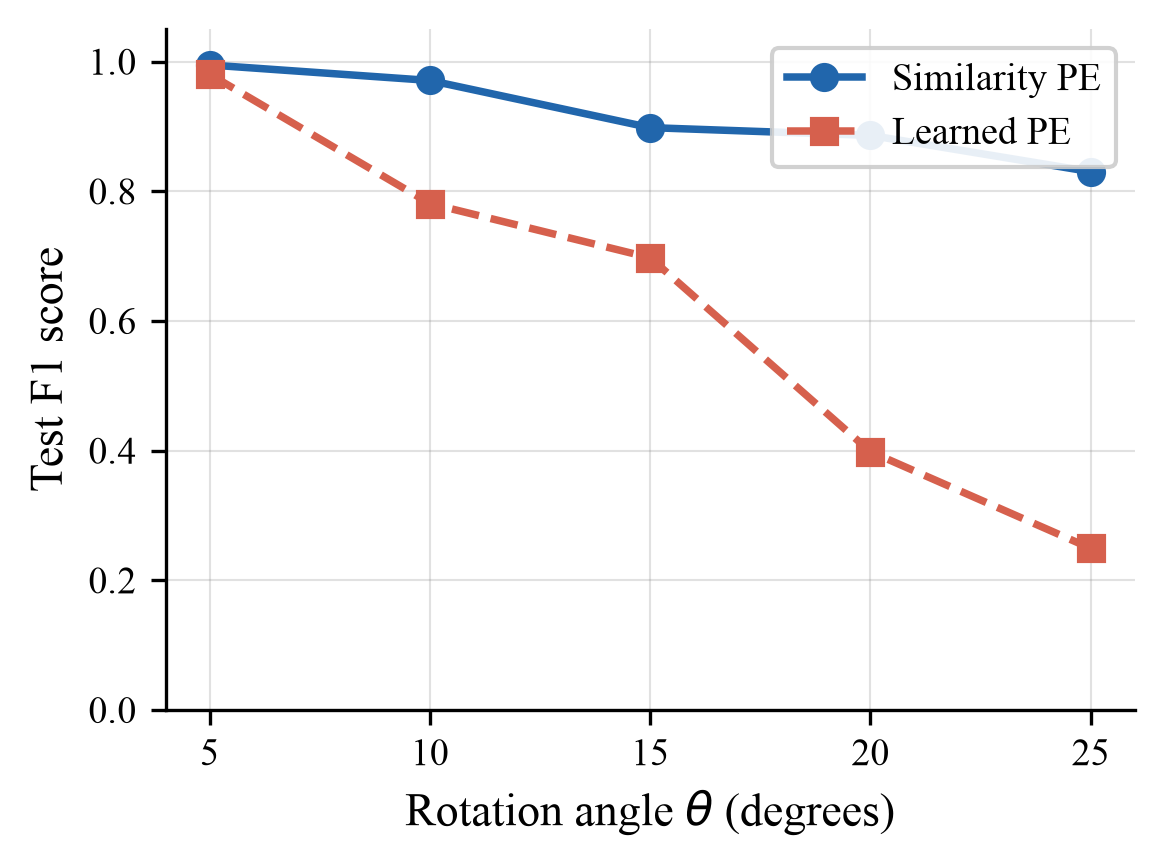}
        \caption{Test F1 score}
    \end{subfigure}
    \vspace{0.5em}
    \begin{subfigure}[b]{0.48\textwidth}
        \includegraphics[width=\textwidth]{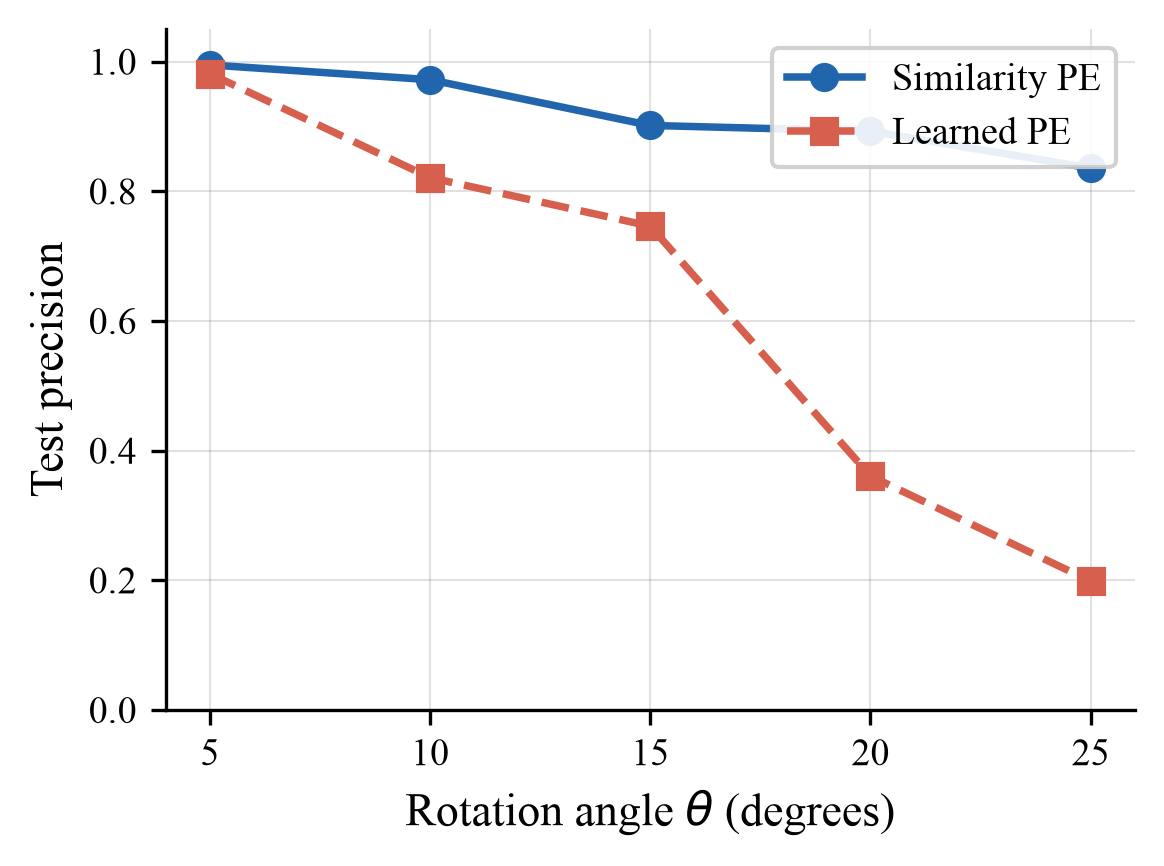}
        \caption{Test precision}
    \end{subfigure}
    \hfill
    \begin{subfigure}[b]{0.48\textwidth}
        \includegraphics[width=\textwidth]{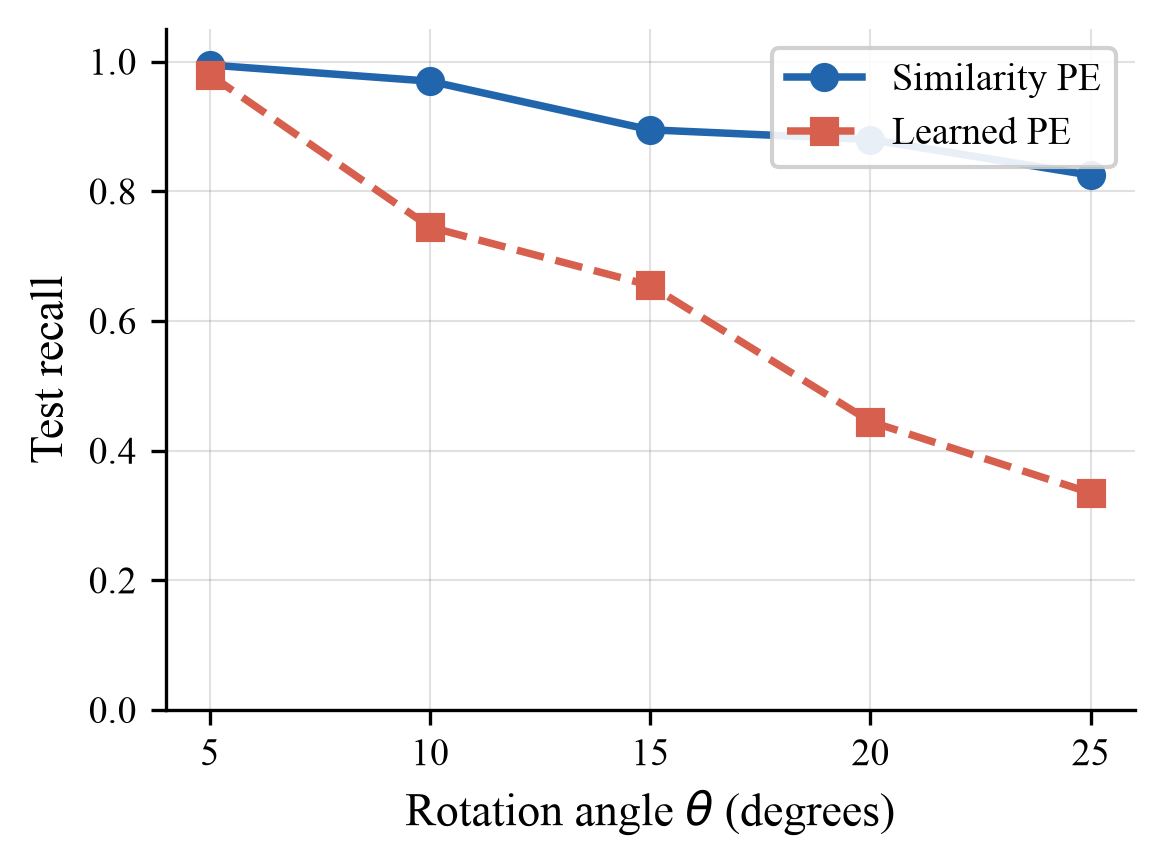}
        \caption{Test recall}
    \end{subfigure}
    \caption{Performance on the Arrow dataset as a function of test rotation angle (degrees). simPE (blue) degrades gracefully, while Learned PE (red) drops sharply even at moderate angles.}
    \label{fig:arrow-results}
\end{figure}

\begin{table}[H]
\centering
\caption{Arrow dataset: test accuracy (\%) and F1 score at each rotation angle.}
\label{tab:arrow}
\setlength{\tabcolsep}{6pt}
\begin{tabular}{lcccccc}
\toprule
\multirow{2}{*}{Method} & \multicolumn{5}{c}{Rotation angle (degrees)} \\
\cmidrule(lr){2-6}
 & $5^\circ$ & $10^\circ$ & $15^\circ$ & $20^\circ$ & $25^\circ$ \\
\midrule
\multicolumn{6}{l}{\textit{Accuracy (\%)}} \\
simPE       & \textbf{99.5} & \textbf{97.0} & \textbf{89.5} & \textbf{88.0} & \textbf{82.5} \\
Learned PE  & 98.0          & 74.5          & 65.5          & 44.5          & 33.5 \\
\midrule
\multicolumn{6}{l}{\textit{F1 score}} \\
simPE       & \textbf{0.995} & \textbf{0.971} & \textbf{0.898} & \textbf{0.886} & \textbf{0.830} \\
Learned PE  & 0.981          & 0.781          & 0.698          & 0.399          & 0.250 \\
\bottomrule
\end{tabular}
\end{table}

The Arrow results show a clear and consistent advantage for simPE across all metrics and rotation angles (Figure~\ref{fig:arrow-results}, Table~\ref{tab:arrow}). At $5^\circ$, both methods perform comparably (99.5\% vs.\ 98.0\% accuracy). However, Learned PE degrades steeply: by $10^\circ$, accuracy has dropped to 74.5\%, and by $25^\circ$ it falls to 33.5\%, barely above chance for a binary or near-uniform multi-class task. In contrast, simPE retains 82.5\% accuracy at $25^\circ$, a gap of almost 50 percentage points. The F1, precision, and recall curves confirm this pattern: simPE degrades smoothly and retains meaningful performance across the full range, while Learned PE collapses rapidly. This result is consistent with the theoretical prediction of Corollary~\ref{cor:small-angle}: the stability constant of simPE is substantially smaller than that of learned PE for this task.

\subsection{Digits Dataset}

\begin{figure}[H]
    \centering
    \begin{subfigure}[b]{0.48\textwidth}
        \includegraphics[width=\textwidth]{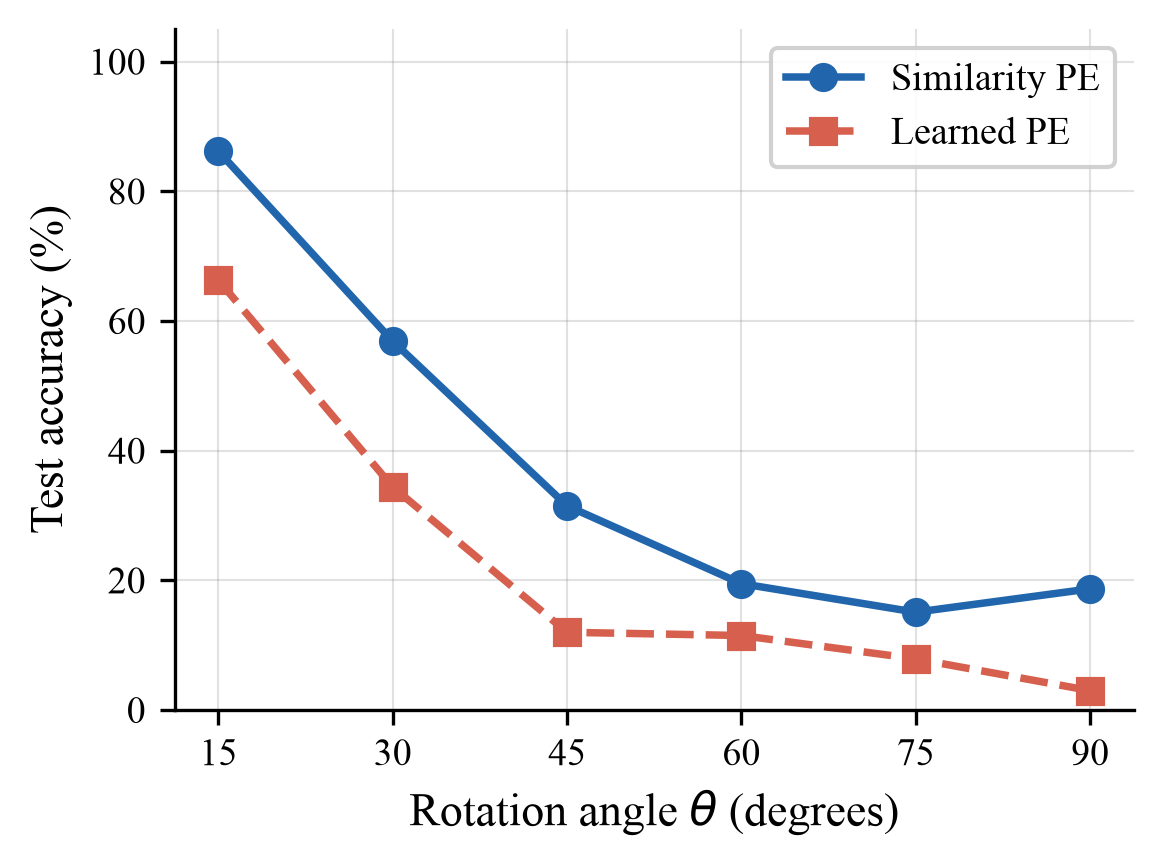}
        \caption{Test accuracy (\%)}
    \end{subfigure}
    \hfill
    \begin{subfigure}[b]{0.48\textwidth}
        \includegraphics[width=\textwidth]{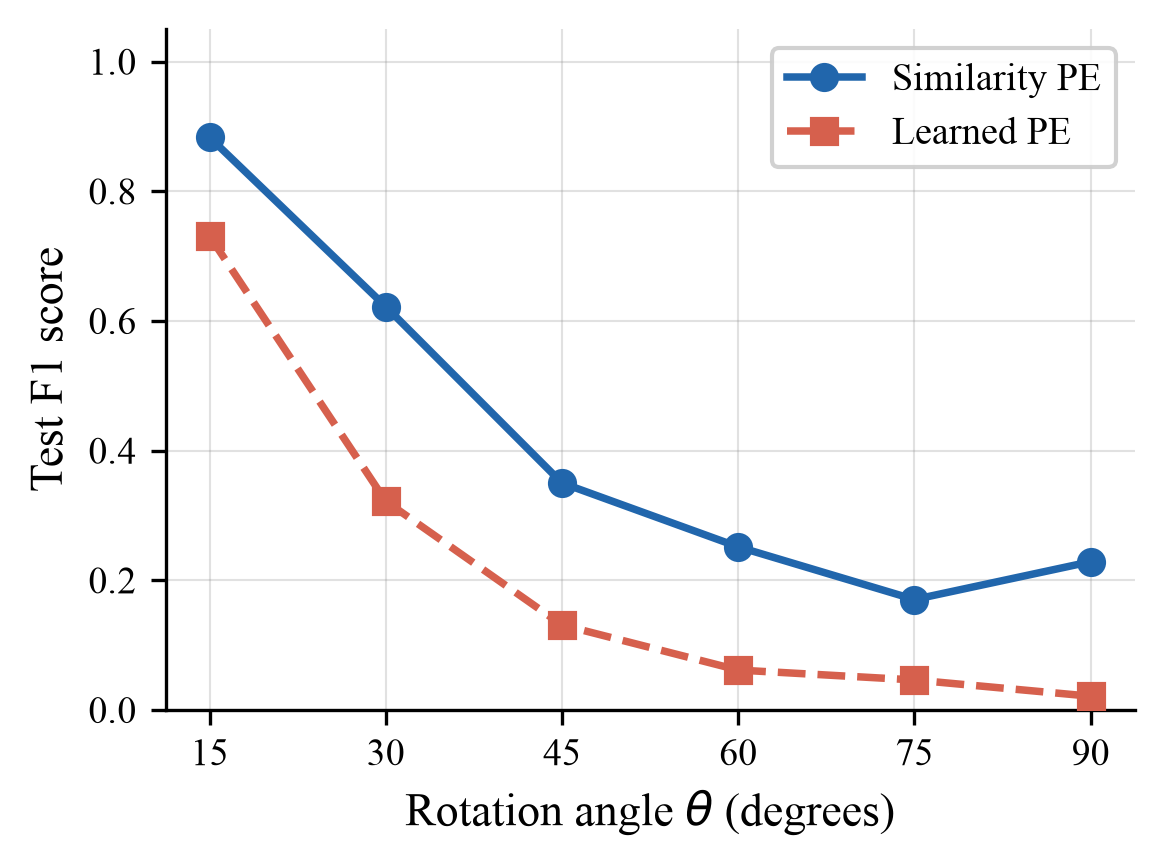}
        \caption{Test F1 score}
    \end{subfigure}
    \vspace{0.5em}
    \begin{subfigure}[b]{0.48\textwidth}
        \includegraphics[width=\textwidth]{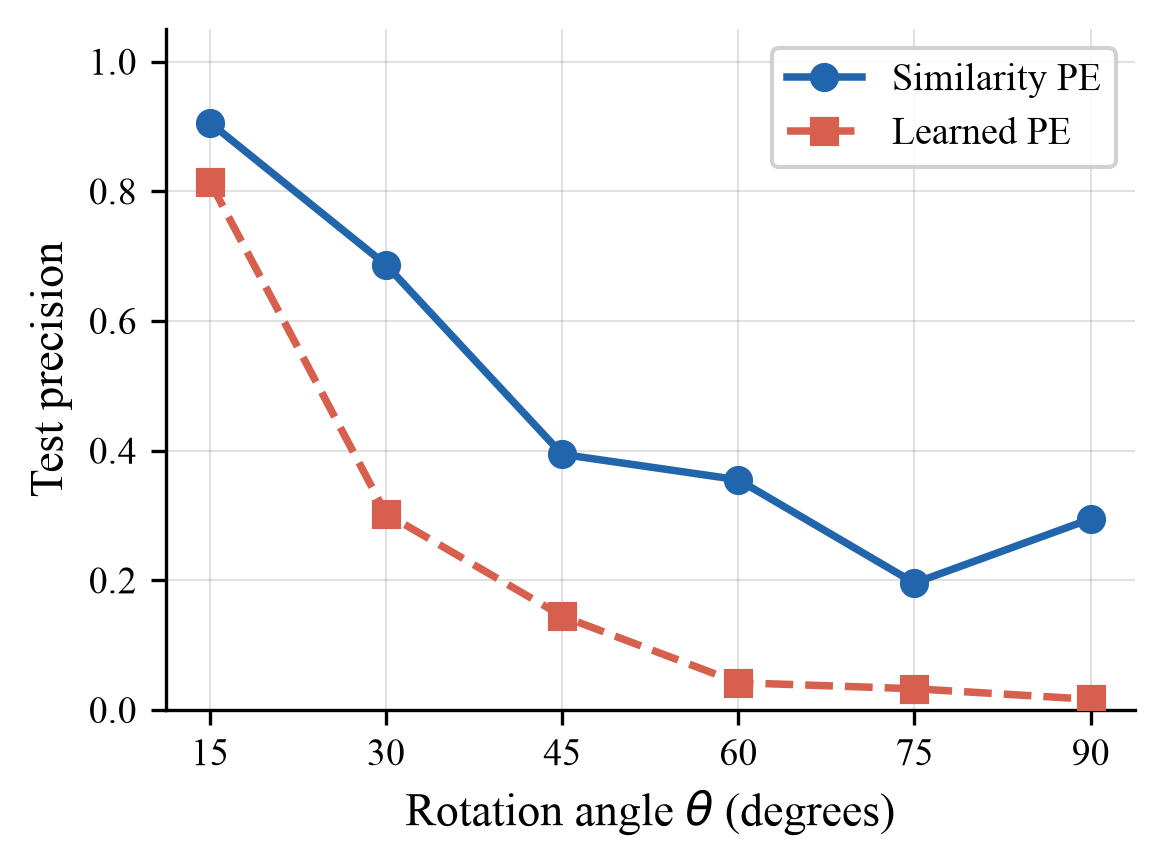}
        \caption{Test precision}
    \end{subfigure}
    \hfill
    \begin{subfigure}[b]{0.48\textwidth}
        \includegraphics[width=\textwidth]{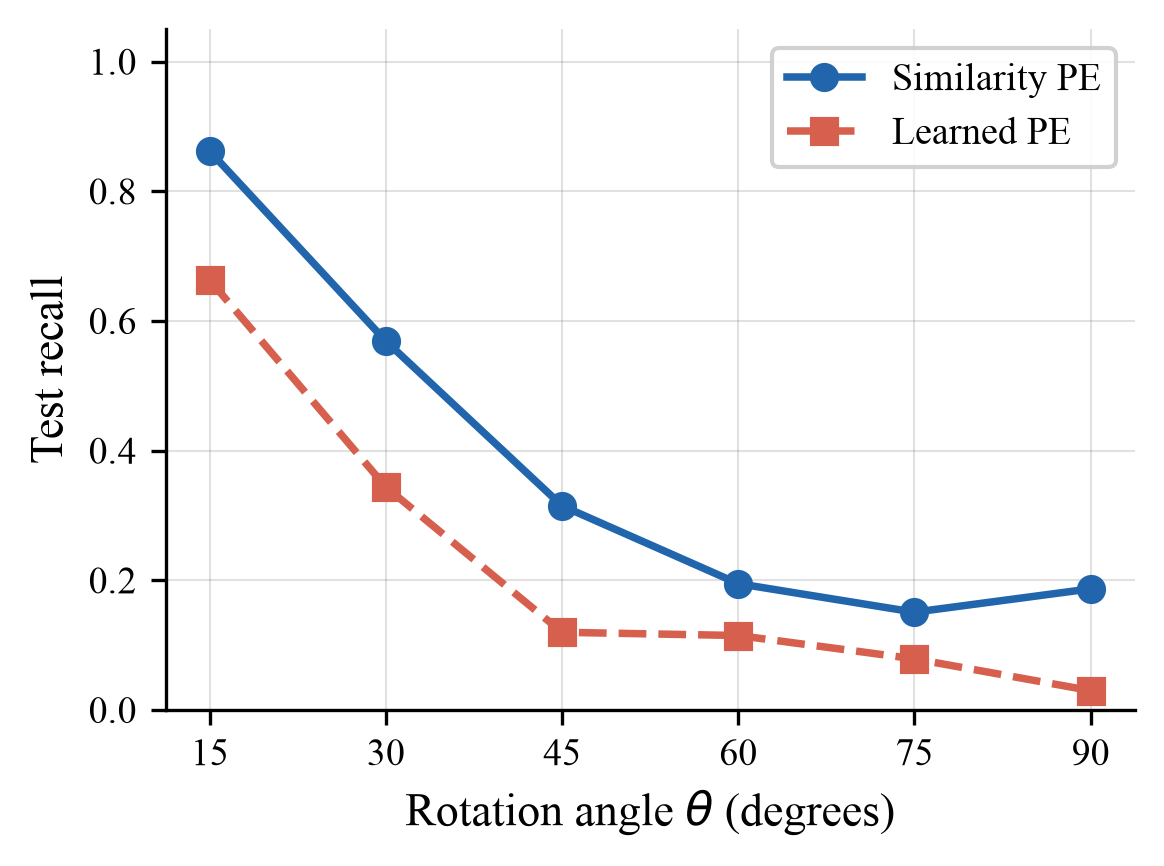}
        \caption{Test recall}
    \end{subfigure}
    \caption{Performance on the Digits dataset as a function of test rotation angle (degrees). Both methods degrade severely at large angles; simPE maintains a consistent advantage throughout.}
    \label{fig:digits-results}
\end{figure}

\begin{table}[H]
\centering
\caption{Digits dataset: test accuracy (\%) and F1 score at each rotation angle.}
\label{tab:digits}
\setlength{\tabcolsep}{6pt}
\begin{tabular}{lccccccc}
\toprule
\multirow{2}{*}{Method} & \multicolumn{6}{c}{Rotation angle (degrees)} \\
\cmidrule(lr){2-7}
 & $15^\circ$ & $30^\circ$ & $45^\circ$ & $60^\circ$ & $75^\circ$ & $90^\circ$ \\
\midrule
\multicolumn{7}{l}{\textit{Accuracy (\%)}} \\
simPE       & \textbf{86.2} & \textbf{56.9} & \textbf{31.5} & \textbf{19.5} & \textbf{15.1} & \textbf{18.7} \\
Learned PE  & 66.3          & 34.4          & 12.0          & 11.5          &  7.9          &  3.0 \\
\midrule
\multicolumn{7}{l}{\textit{F1 score}} \\
simPE       & \textbf{0.884} & \textbf{0.622} & \textbf{0.350} & \textbf{0.252} & \textbf{0.171} & \textbf{0.229} \\
Learned PE  & 0.731          & 0.322          & 0.131          & 0.062          & 0.047          & 0.021 \\
\bottomrule
\end{tabular}
\end{table}

On the Digits dataset, the advantage of simPE is even more pronounced (Figure~\ref{fig:digits-results}, Table~\ref{tab:digits}). At $15^\circ$, simPE achieves 86.2\% accuracy compared to 66.3\% for Learned PE, a gap of 19.9 percentage points from the first evaluation angle. As rotation increases to $45^\circ$, Learned PE drops to 12\% (near chance for a 10-class problem) while simPE retains 31.5\%. At $90^\circ$, simPE (18.7\%) still nearly doubles the accuracy of Learned PE (3.0\%). The non-monotone behavior of simPE at $90^\circ$ (a slight recovery from 15.1\% at $75^\circ$ to 18.7\% at $90^\circ$) is consistent with the known symmetry of digits under $180^\circ$ rotations (e.g., digit $0$ is symmetric, and pairs such as $6$/$9$ become confusable but not maximally confusing at $90^\circ$). The precision and F1 curves corroborate these findings.

\subsection{FashionMNIST Dataset}

\begin{figure}[H]
    \centering
    \begin{subfigure}[b]{0.48\textwidth}
        \includegraphics[width=\textwidth]{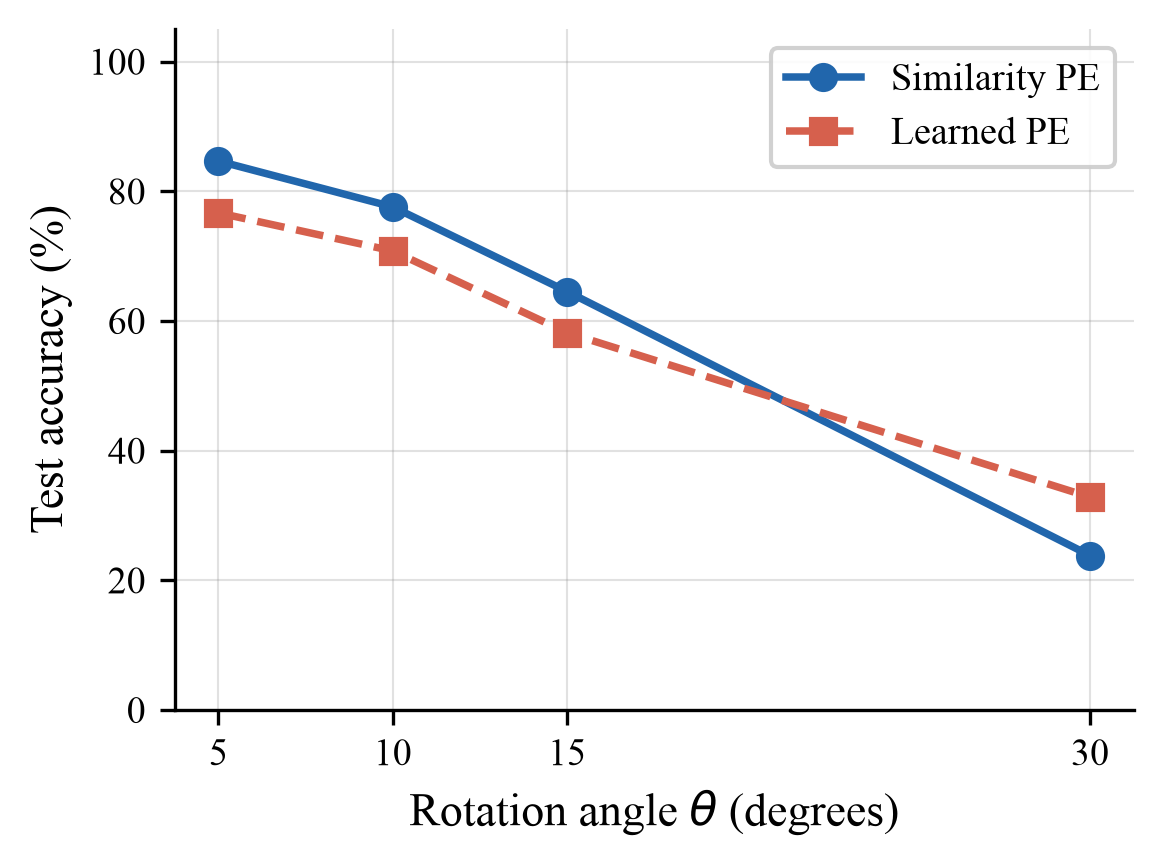}
        \caption{Test accuracy (\%)}
    \end{subfigure}
    \hfill
    \begin{subfigure}[b]{0.48\textwidth}
        \includegraphics[width=\textwidth]{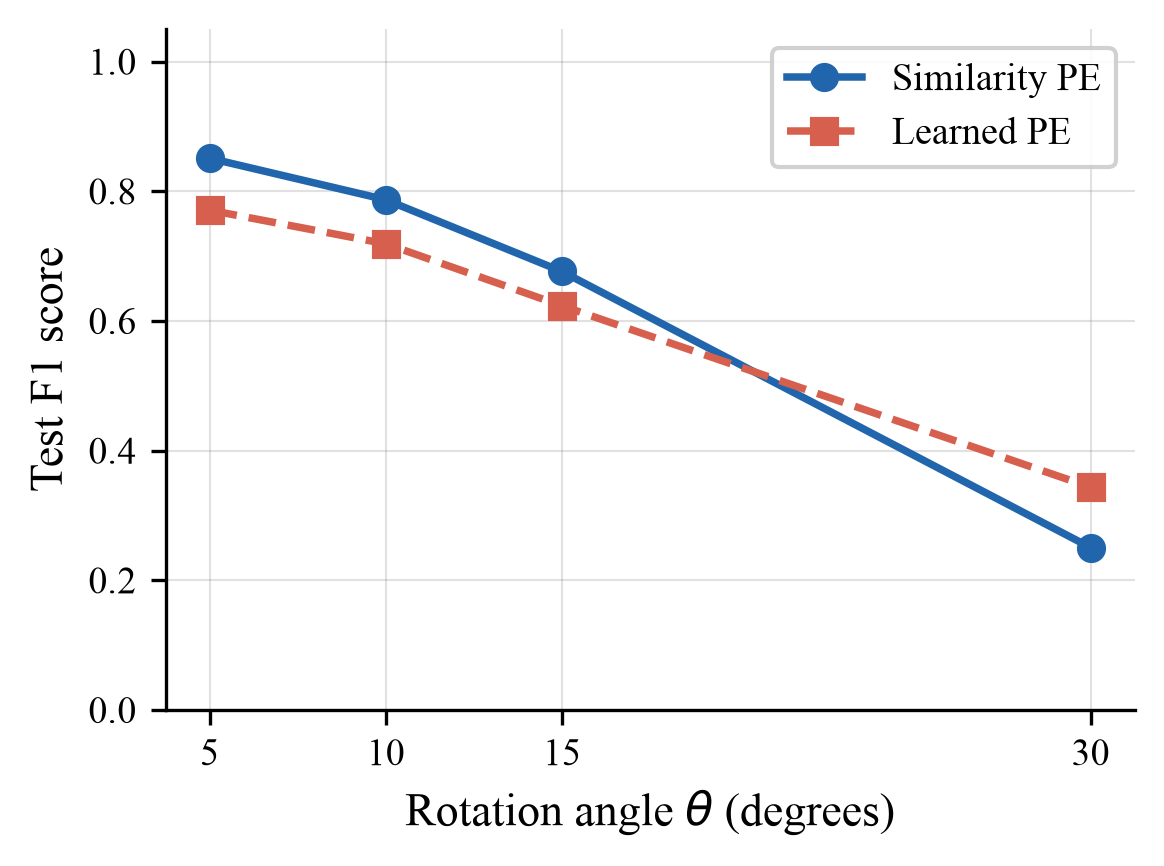}
        \caption{Test F1 score}
    \end{subfigure}
    \vspace{0.5em}
    \begin{subfigure}[b]{0.48\textwidth}
        \includegraphics[width=\textwidth]{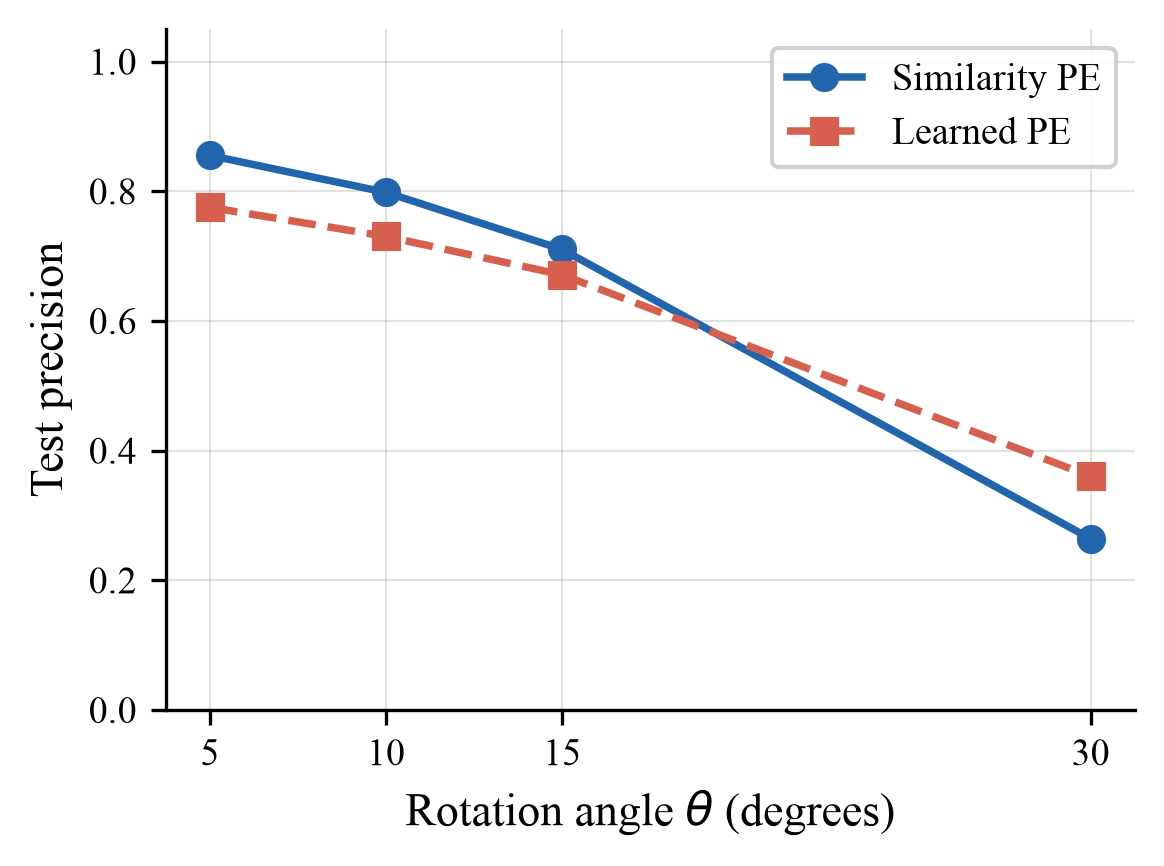}
        \caption{Test precision}
    \end{subfigure}
    \hfill
    \begin{subfigure}[b]{0.48\textwidth}
        \includegraphics[width=\textwidth]{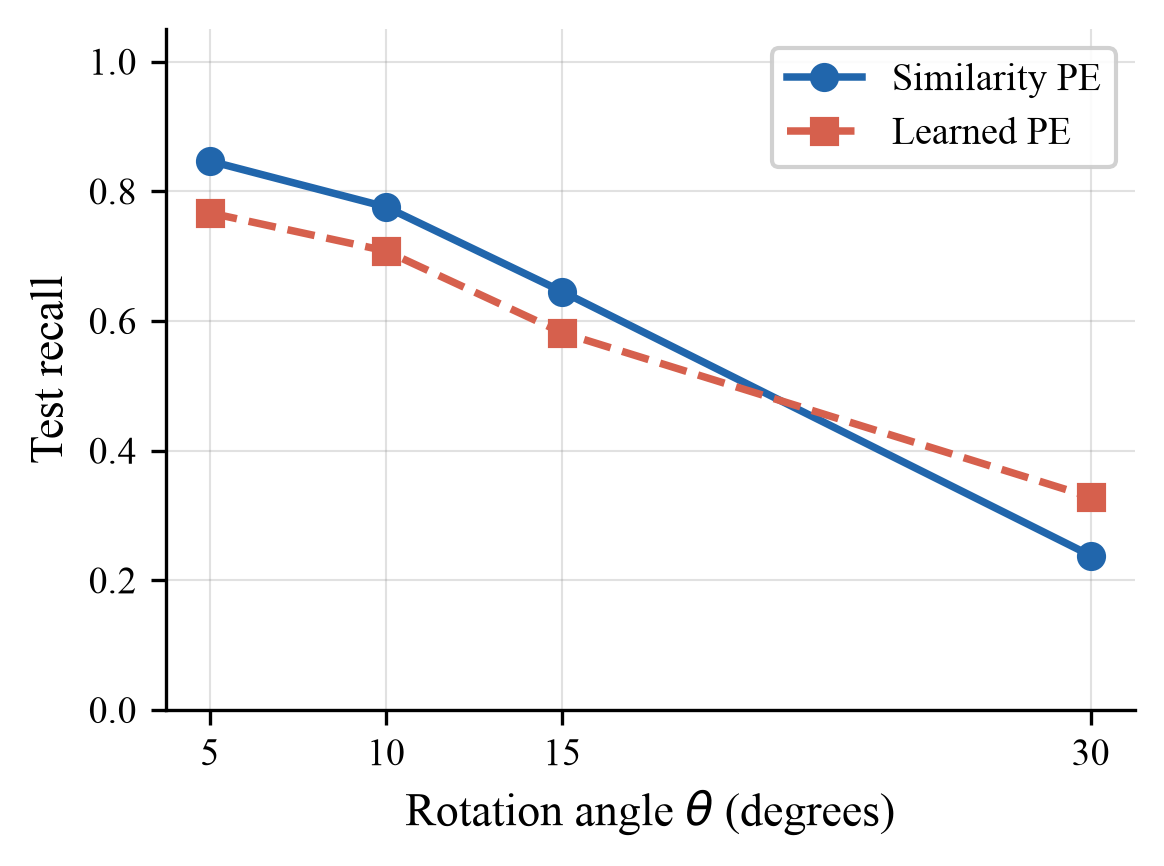}
        \caption{Test recall}
    \end{subfigure}
    \caption{Performance on the FashionMNIST dataset as a function of test rotation angle (degrees). simPE consistently outperforms Learned PE for small-to-moderate rotations; both methods converge at larger angles.}
    \label{fig:fashionmnist-results}
\end{figure}

\begin{table}[H]
\centering
\caption{FashionMNIST dataset: test accuracy (\%) and F1 score at each rotation angle.}
\label{tab:fashionmnist}
\setlength{\tabcolsep}{6pt}
\begin{tabular}{lccccc}
\toprule
\multirow{2}{*}{Method} & \multicolumn{4}{c}{Rotation angle (degrees)} \\
\cmidrule(lr){2-5}
 & $5^\circ$ & $10^\circ$ & $15^\circ$ & $30^\circ$ \\
\midrule
\multicolumn{5}{l}{\textit{Accuracy (\%)}} \\
simPE       & \textbf{84.69} & \textbf{77.61} & \textbf{64.56} & 23.84 \\
Learned PE  & 76.69          & 70.82          & 58.21          & \textbf{32.84} \\
\midrule
\multicolumn{5}{l}{\textit{F1 score}} \\
simPE       & \textbf{0.851} & \textbf{0.787} & \textbf{0.677} & 0.251 \\
Learned PE  & 0.771          & 0.719          & 0.624          & \textbf{0.344} \\
\bottomrule
\end{tabular}
\end{table}

FashionMNIST reveals a more nuanced picture (Figure~\ref{fig:fashionmnist-results}, Table~\ref{tab:fashionmnist}). For rotations up to $15^\circ$, simPE consistently outperforms Learned PE by 6--8 percentage points in accuracy. At $30^\circ$, however, the trend reverses: Learned PE achieves 32.84\% while simPE drops to 23.84\%. This cross-over behavior is notable. A likely explanation is that FashionMNIST categories include several items with approximate orientation symmetry (e.g., bags, trousers) for which learned positional representations may generalize better at large angles, while the similarity-based approach overfits to the upright orientation. Regardless, the primary regime of interest---small-to-moderate rotations, as motivated by the medical imaging setting---strongly favors simPE, in agreement with the theoretical guarantees.

\subsection{Shapes Dataset}

\begin{figure}[H]
    \centering
    \begin{subfigure}[b]{0.48\textwidth}
        \includegraphics[width=\textwidth]{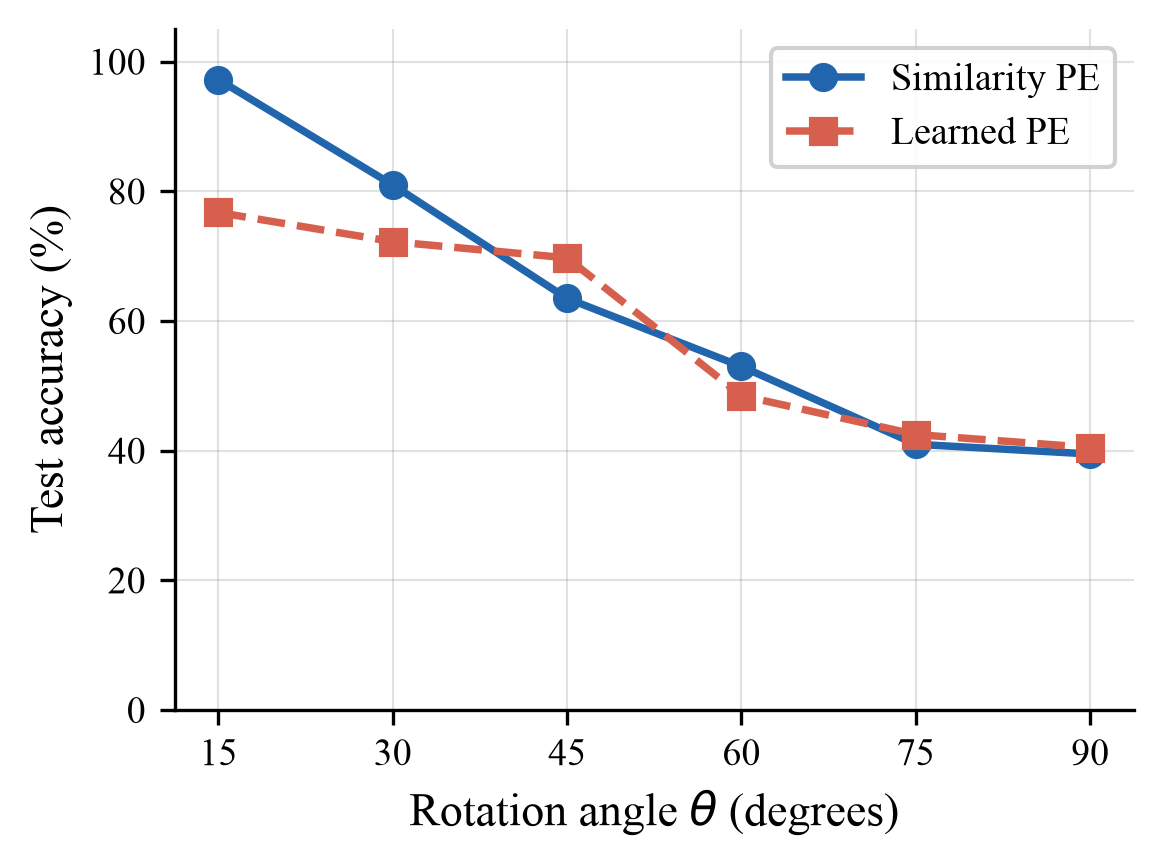}
        \caption{Test accuracy (\%)}
    \end{subfigure}
    \hfill
    \begin{subfigure}[b]{0.48\textwidth}
        \includegraphics[width=\textwidth]{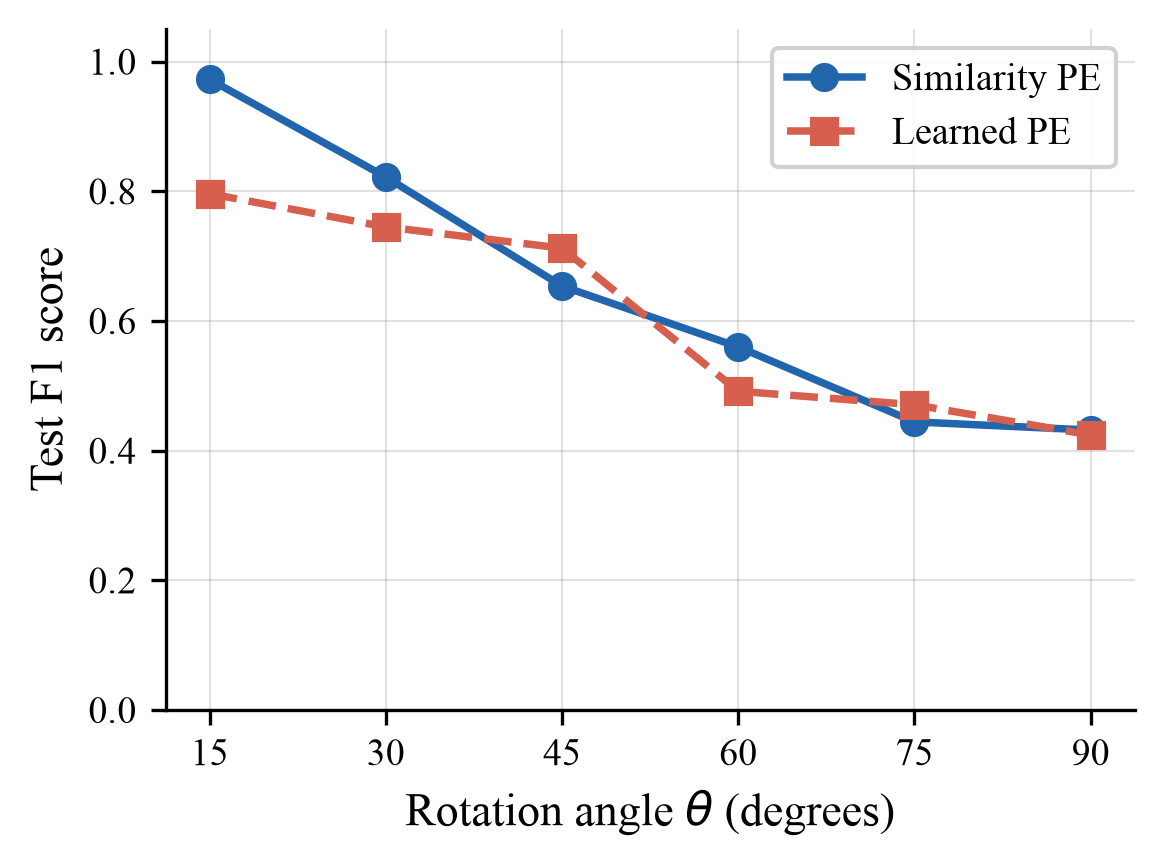}
        \caption{Test F1 score}
    \end{subfigure}
    \vspace{0.5em}
    \begin{subfigure}[b]{0.48\textwidth}
        \includegraphics[width=\textwidth]{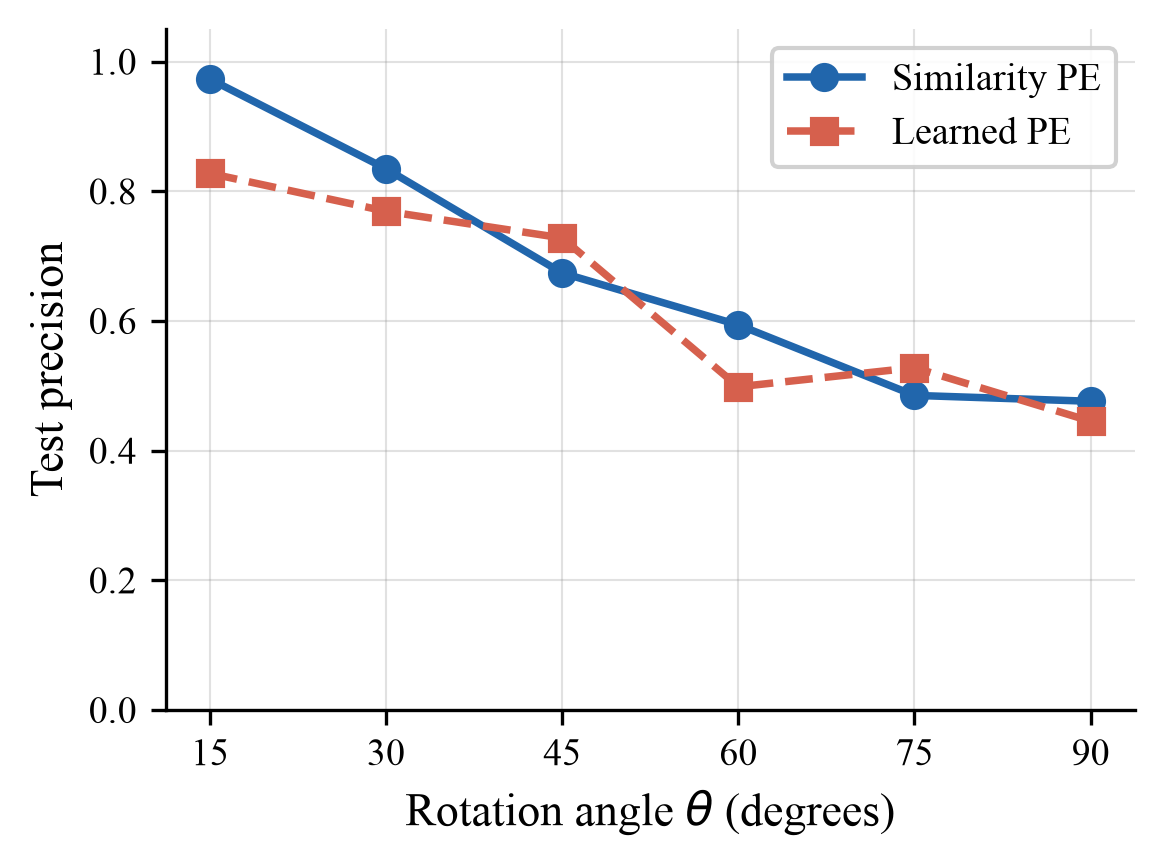}
        \caption{Test precision}
    \end{subfigure}
    \hfill
    \begin{subfigure}[b]{0.48\textwidth}
        \includegraphics[width=\textwidth]{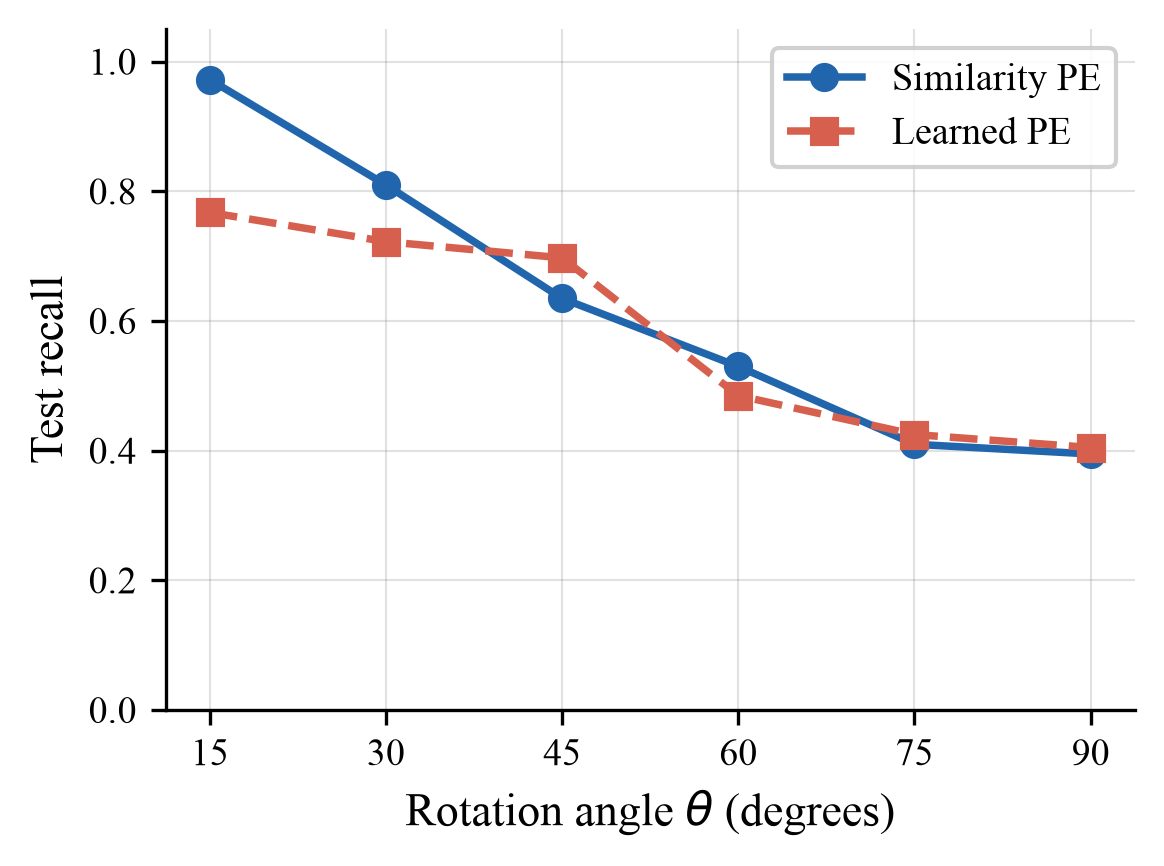}
        \caption{Test recall}
    \end{subfigure}
    \caption{Performance on the Shapes dataset as a function of test rotation angle (degrees). simPE strongly outperforms Learned PE at small angles; the advantage narrows at larger angles and the two methods converge near $45^\circ$--$90^\circ$.}
    \label{fig:shapes-results}
\end{figure}

\begin{table}[H]
\centering
\caption{Shapes dataset: test accuracy (\%) and F1 score at each rotation angle.}
\label{tab:shapes}
\setlength{\tabcolsep}{6pt}
\begin{tabular}{lccccccc}
\toprule
\multirow{2}{*}{Method} & \multicolumn{6}{c}{Rotation angle (degrees)} \\
\cmidrule(lr){2-7}
 & $15^\circ$ & $30^\circ$ & $45^\circ$ & $60^\circ$ & $75^\circ$ & $90^\circ$ \\
\midrule
\multicolumn{7}{l}{\textit{Accuracy (\%)}} \\
simPE       & \textbf{97.25} & \textbf{81.00} & 63.50          & \textbf{53.00} & 41.00          & 39.50 \\
Learned PE  & 76.75          & 72.25          & \textbf{69.75} & 48.50          & \textbf{42.50} & \textbf{40.50} \\
\midrule
\multicolumn{7}{l}{\textit{F1 score}} \\
simPE       & \textbf{0.973} & \textbf{0.822} & 0.654          & \textbf{0.560} & 0.445          & 0.432 \\
Learned PE  & 0.797          & 0.745          & \textbf{0.713} & 0.492          & \textbf{0.471} & \textbf{0.424} \\
\bottomrule
\end{tabular}
\end{table}

The Shapes dataset shows a clear two-regime behaviour (Figure~\ref{fig:shapes-results}, Table~\ref{tab:shapes}). At $15^\circ$, simPE achieves 97.25\% accuracy compared to 76.75\% for Learned PE, a gap of 20.5 percentage points---the largest first-angle advantage observed across all four datasets. At $30^\circ$, simPE (81.00\%) remains substantially ahead of Learned PE (72.25\%). However, starting at $45^\circ$ the advantage disappears: Learned PE (69.75\%) overtakes simPE (63.50\%), a crossover analogous to that observed for FashionMNIST at $30^\circ$. Beyond $45^\circ$, both methods continue to degrade at comparable rates, converging near chance level at $90^\circ$ (39.5\% vs.\ 40.5\% for a 4-class problem). The F1, precision, and recall curves confirm this pattern: all metrics show a strong simPE advantage at small angles that erodes at large angles. The crossover is consistent with the theoretical bound of Corollary~\ref{cor:small-angle}: as $|\theta|$ grows, the perturbation $\|R_\theta X - X\|_F$ increases, and the stability guarantee, though valid for all angles, no longer suffices to offset the large mismatch between the training and test distributions.

\subsection{Summary}

Table~\ref{tab:summary} provides a high-level summary of the accuracy advantage of simPE over Learned PE across all datasets and the first (smallest) rotation angle evaluated. In all three datasets, simPE starts with a higher accuracy and maintains the advantage for the majority of the rotation range tested.

\begin{table}[H]
\centering
\caption{Summary of simPE vs.\ Learned PE accuracy (\%) at the smallest tested rotation angle across all four datasets.}
\label{tab:summary}
\begin{tabular}{lcccc}
\toprule
Dataset & Rotation & simPE & Learned PE & Difference \\
\midrule
Arrow        & $5^\circ$  & 99.50 & 98.00 & $+$1.5 \\
Digits       & $15^\circ$ & 86.20 & 66.30 & $+$19.9 \\
FashionMNIST & $5^\circ$  & 84.69 & 76.69 & $+$8.0 \\
Shapes       & $15^\circ$ & 97.25 & 76.75 & $+$20.5 \\
\bottomrule
\end{tabular}
\end{table}

\section{Discussion}

The analysis and experiments developed above highlight an important distinction between invariance and robustness.

\paragraph{Invariance vs.\ robustness.}
Exact invariance is a strong property requiring that the positional encoding remain unchanged under rotation. simPE generally does not satisfy this requirement (Proposition~\ref{prop:not-invariant}). However, in many practical scenarios, exact invariance is neither necessary nor desirable: orientation often carries useful discriminative information, and discarding it entirely would reduce the encoding's expressiveness \cite{cohen2016group}. What is needed instead is a weaker form of stability ensuring that small geometric perturbations induce only small changes in the encoding. This is precisely what our theoretical results establish (Theorem~\ref{thm:simpe-lipschitz}, Corollary~\ref{cor:small-angle}).

\paragraph{Theoretical mechanism.}
The key mechanism underlying robustness is the Lipschitz continuity of the constituent components of simPE. Under mild regularity assumptions---bounded feature domain, Lipschitz similarity, Lipschitz post-processing, and non-degenerate normalization---the composition $\Phi \circ S$ is itself Lipschitz, and its response to rotations is bounded by the product of the individual Lipschitz constants and the magnitude of the rotation. This is a general principle: any encoding built from Lipschitz components inherits robustness to small perturbations in the input.

\paragraph{Experimental corroboration.}
The experimental results strongly support the theoretical predictions. Across all four datasets, simPE exhibits a slower and more graceful degradation curve than Learned PE as rotation angle increases, particularly in the small-to-moderate angle regime. On the Arrow synthetic dataset, the gap reaches almost 50 percentage points at $25^\circ$. On the Digits benchmark, simPE nearly doubles Learned PE accuracy at $90^\circ$. On the Shapes dataset, simPE leads by 20.5 percentage points at $15^\circ$, with the advantage narrowing at larger angles and reversing near $45^\circ$. On FashionMNIST, simPE is superior for rotations up to $15^\circ$ (the regime of practical interest), with Learned PE recovering slightly only at the largest tested angle ($30^\circ$). In both FashionMNIST and Shapes, the crossover at large angles is consistent with dataset-specific structure (orientation-symmetric clothing categories and the degradation of training-distribution proximity at large angular offsets, respectively) rather than a failure of the theoretical stability guarantee.

\paragraph{Implications for medical imaging.}
This distinction is especially important in medical imaging, the original domain for which simPE was conceived \cite{leonardi2024simpe, 10.1007/978-3-032-16708-8_6}. In this setting, small rotations are not rare or artificial perturbations; they arise naturally from the acquisition process itself. Instrument configuration, patient placement, and subtle differences in imaging setup can all introduce small rotational discrepancies across scans. The relevant theoretical question is not whether simPE is exactly invariant, but whether it remains stable under the kinds of perturbations that occur in practice. Our results confirm that this is indeed the case.

\paragraph{Improving stability.}
Our results also suggest that the stability of simPE can be improved by explicitly controlling the Lipschitz constants of its components: bounded similarity kernels, regularized projections, or normalization schemes that avoid near-zero singularities will all reduce the stability constant $C = 2L_\Phi L_\kappa(M)\sqrt{N}$ and thus improve robustness. This provides a principled roadmap for architectural refinements.

\paragraph{Limitations and future directions.}
While the present analysis is intentionally general, it can be extended in several directions. First, the analysis covers arbitrary rotations but the experimental validation focuses on 2D image rotation; extending to 3D volumetric medical data is a natural next step. Second, the Lipschitz constants $L_\Phi$ and $L_\kappa(M)$ are not explicitly estimated here; tighter bounds for specific similarity choices (e.g., dot-product, cosine, RBF kernels) would make the stability guarantee more actionable. Third, investigating how the bounds interact with learned feature extractors upstream of simPE---especially when those extractors are themselves only approximately equivariant---remains an open problem. Finally, extending the controlled evaluation to other geometric transformations (translations, anisotropic scalings, affine perturbations) would provide a more comprehensive picture of simPE's geometric robustness.

\section{Conclusion}

This paper presented a combined theoretical and experimental study of the robustness of similarity-based positional encoding under rotations. On the theoretical side, we showed that simPE is not rotation-invariant in general, but remains robustly stable to rotational perturbations under mild Lipschitz regularity assumptions on its components. We derived explicit perturbation bounds and a global estimate in Frobenius norm that separates the architectural Lipschitz constant from the geometric perturbation magnitude. On the experimental side, we validated these findings on four controlled datasets---Arrow (synthetic), Shapes (synthetic), Digits (synthetic), and FashionMNIST (benchmark)---in which training is always performed on canonical-orientation images and evaluation probes increasing rotation angles. Across all datasets and metrics, simPE consistently and substantially outperforms learned positional encoding under rotation, particularly in the small-to-moderate angle regime that is most relevant to practical applications such as medical imaging.

These results provide a concise mathematical characterization of simPE as a stable, though not invariant, positional encoding mechanism. They are particularly relevant for medical imaging, where simPE was originally introduced and where small acquisition-induced rotations naturally occur due to instruments and imaging procedures. Future work will include extending the analysis to three-dimensional settings, investigating tighter stability constants for specific similarity choices, and validating the theoretical predictions on medical imaging benchmarks.

\bibliographystyle{plain}
\bibliography{references}

\end{document}